\documentclass[letterpaper]{article}
\usepackage[preprint]{aaai2027}
\usepackage[hyphens]{url}
\usepackage{graphicx}
\usepackage{caption}
\usepackage{subcaption}
\usepackage{float}
\usepackage{mathtools}
\usepackage{array}
\usepackage{tabularx}
\usepackage{natbib}
\usepackage{amsmath,amssymb,amsthm}
\usepackage{booktabs}
\usepackage{algorithm}
\usepackage{algpseudocode}

\newtheorem{theorem}{Theorem}
\newtheorem{proposition}{Proposition}
\newtheorem{definition}{Definition}
\newtheorem{corollary}{Corollary}
\theoremstyle{definition}
\newtheorem{assumption}{Assumption}
\theoremstyle{plain}
\newtheorem{remark}{Remark}
\newcommand{\R}{\mathbb{R}}
\newcommand{\cD}{\mathcal{D}}
\newcommand{\cX}{\mathcal{X}}

\newcommand{\cZ}{\mathcal{Z}}
\newcommand{\cE}{\mathcal{E}}
\newcommand{\cC}{\mathfrak{C}}
\newcommand{\cA}{\mathsf{A}}
\newcommand{\abstain}{\bot}
\newcommand{\abstainF}{\bot_{\mathrm F}}
\newcommand{\abstainD}{\bot_{\mathrm D}}
\newcommand{\abstainS}{\bot_{\mathrm S}}
\newcommand{\diam}{\operatorname{diam}}
\newcommand{\supp}{\operatorname{supp}}
\newcolumntype{Y}{>{\raggedright\arraybackslash}X}
\title{Certifying Plans under Model Mismatch: A Trilemma for Reachability \\ from Scarce Data}
\author{
    Yanliang Huang\textsuperscript{\rm 1,*},
    Zhen Zhang\textsuperscript{\rm 1,*},
    Ahmad Hafez\textsuperscript{\rm 1},
    Wenyuan Wu\textsuperscript{\rm 1},
    Peng Xie\textsuperscript{\rm 1},
    Zhuoqi Zeng\textsuperscript{\rm 2},
    Amr Alanwar\textsuperscript{\rm 1}
}
\affiliations{
    \textsuperscript{\rm 1}School of Computation, Information and Technology, Technical University of Munich, Munich, Germany\\
    \textsuperscript{\rm 2}School of Engineering, Hainan Bielefeld University of Applied Sciences, Hainan, China\\
    \{yanliang.huang, zhenzhang.zhang, a.hafez, wenyuan.wu, p.xie, alanwar\}@tum.de, zhuoqi.zeng@hainan-biuh.edu.cn\\
    \textsuperscript{*}These authors contributed equally.
}

\begin{document}

\maketitle

\begin{abstract}
Sim-to-real policies are designed under nominal dynamics, but target-system
trials may yield only a few isolated one-step transitions. We study
pre-execution certification of a fixed control sequence, such as an action chunk
produced by a learned policy. If the sequence reaches an unobserved state-input
region, the observations remain consistent with target systems
whose trajectories separate along it by an arbitrarily large amount. Any deterministic certifier
sound for all of them must then decline to certify or return a reachable tube
with arbitrarily large projected width. For bounded smooth classes of the
target--nominal model error, we derive a finite plan-dependent projected-width lower bound.
These results expose a trilemma among uniform
trajectory containment, finite projected width, and unrestricted model-error
behavior beyond the observations. ForeReach requires a supplied componentwise Lipschitz bound on the model
error. Observed transition pairs can refute this declaration but cannot
establish it outside the observed locations. Conditional on a valid
declaration, our method constructs a set-membership envelope for the model error, propagates a
zonotopic reachable tube, and certifies only when propagation remains within
the certification domain and every projected tube slice avoids the unsafe set.
In two benchmark systems, calibration baselines may remain narrow after losing
trajectory containment outside data support, whereas our method declines to
certify unsupported sequences and recovers certification when relevant target
data and sufficient obstacle clearance are available.
\end{abstract}

\section{Introduction}
\label{sec:introduction}

Sim-to-real transfer is a central challenge for deploying learned policies on
robots. The policies are trained in simulation, whose dynamics differ from the real
system's. This model error between target and nominal dynamics can make behavior
safe in simulation unsafe on hardware \citep{knuth2021learneddynamics,srinivasan2026cpslsmpc}. A
candidate control sequence, therefore, needs certification before it is applied to
the target system. A nominal model, for example, a simulator, is available, but
the target data consist of only a small set of isolated one-step transitions. This
setting fits policies that emit finite action chunks, such as ACT, Diffusion
Policy, and $\pi_0$ \citep{zhao2023act,chi2023diffusionpolicy,black2025pi0}. Whether the task is
closed-loop or open-loop, the executed chunk is a fixed control sequence, and we study its open-loop
pre-execution certification, for which
a reachable tube tests the sequence for collision
\citep{michaux2024sparrows,kwon2024crows}. The central difficulty is
that the nominal rollout may enter a state-input region with no target
observation.

Each observed transition reveals the model error only at its sampled pair. Away from the data, this error can change inside a small neighborhood without altering any observation, so two smooth target systems fit the same dataset yet produce different next states once the fixed sequence enters that neighborhood, after which the dynamics amplify their separation. A certifier sound for both must decline or return a tube wide enough to contain both trajectories. Without a bound on how rapidly the error can vary, the required width has no finite uniform bound, and bounded smoothness makes it finite but still governed by how the sequence propagates the local difference.

We propose ForeReach, a pre-execution certifier for a fixed control sequence. It
requires, as side information, a componentwise Lipschitz bound on the
target--nominal model error over a prescribed certification domain. This bound
must be established independently of the sparse transition data, for example from
analytic dynamics with bounded parameters or a separately certified residual model.
The observed transitions cannot verify the bound outside the sampled locations, and
can only reveal contradictions between the bound and observed residual pairs.
Conditional on the bound holding throughout the certification domain, our method
uses all observations to bound the model error over each queried set and propagates
the resulting uncertainty with the nominal model as a zonotopic reachable tube. It issues a certificate only when the propagated
tube stays inside that domain while every projected slice stays clear of the unsafe set.
Figure~\ref{fig:pipeline} summarizes the procedure.

We make the following contributions.
\begin{itemize}
\item We prove that unrestricted model-error behavior beyond the observations
forces any uniformly sound deterministic certifier to decline or return a tube
with arbitrarily large projected width, and we derive a finite plan-dependent
lower bound for bounded smooth classes of the model error.
\item Our method combines a supplied componentwise model-error bound, a pairwise
consistency test, set-membership envelopes for the model error, and zonotopic
propagation to provide trajectory containment conditional on the supplied
declaration and to certify only after the certification-domain and unsafe-set
checks succeed.
\item Experiments on two benchmark systems, including fixed action chunks produced by
a learned policy, distinguish trajectory containment from plan certification, which
additionally requires the tube to prove obstacle avoidance, and show how relevant
target observations recover certification and expose failure under incorrect
Lipschitz declarations.
\end{itemize}

\begin{figure*}[t]
  \centering
  \includegraphics[width=\textwidth]{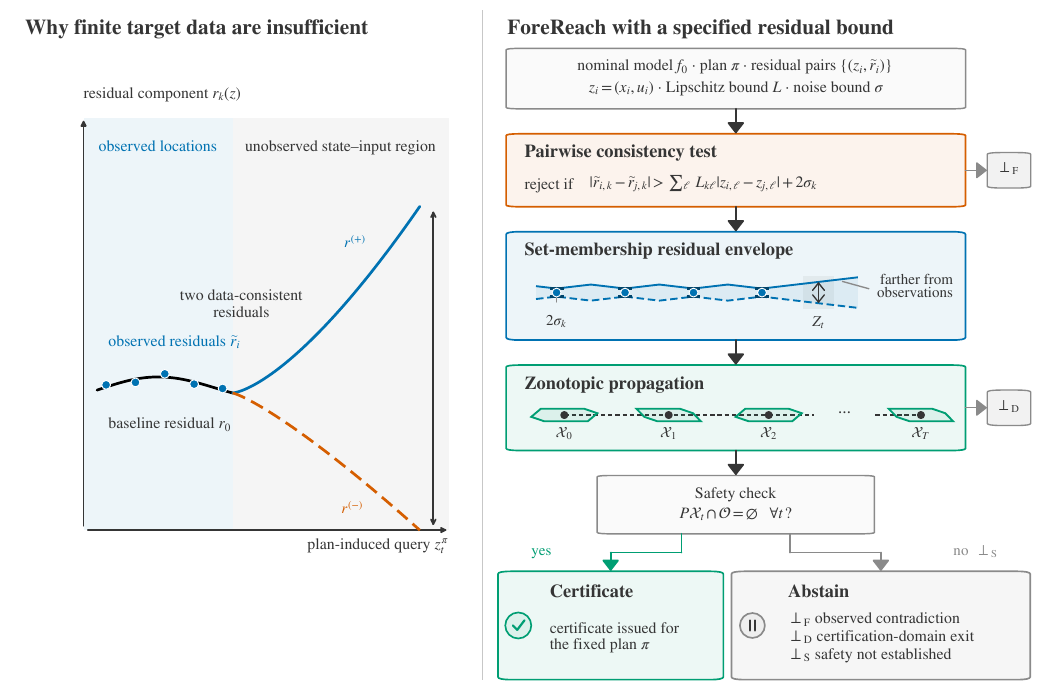}
  \caption{Finite observations admit data-consistent residuals that separate at
  an unobserved plan query. With a supplied Lipschitz bound, our method checks
  observed contradictions, propagates a set-membership residual envelope, and
  returns a certificate or a cause-labeled abstention.}
  \label{fig:pipeline}
\end{figure*}

\section{Preliminaries and Problem Setup}
\label{sec:problem}

\subsection{Preliminaries}

\paragraph{Set operations.}
For sets $A,B$ and a matrix $M$, we write $A\oplus B:=\{a+b:a\in A,\ b\in B\}$ for
the Minkowski sum, $MA:=\{Ma:a\in A\}$ for the linear image, and $A\times B$ for the
Cartesian product.

\paragraph{Zonotopes \citep{conf:zono1998}.}
A zonotope
$Z\subseteq\R^d$ with center $c_Z\in\R^d$ and generator matrix
$G_Z=[g_Z^{(1)}\ \cdots\ g_Z^{(\gamma_Z)}]\in\R^{d\times\gamma_Z}$ is
\begin{equation}
  Z=\langle c_Z,G_Z\rangle
  :=\{c_Z+G_Z\beta:\|\beta\|_\infty\leq1\}.
  \label{eq:zonotope-definition}
\end{equation}
Its order is $\gamma_Z/d$. Linear maps act on the center and generators, Minkowski
sums concatenate generator matrices, Cartesian products stack centers with
block-diagonal generators, and the singleton $\{u\}$ is the degenerate zonotope
$\langle u,0\rangle$.

\subsection{Problem Setup}

\paragraph{Dynamics and observations.}
Let $\mathsf{X}\subseteq\R^n$ and $\mathsf{U}\subseteq\R^m$ be the state and
input domains. At time $t$, the state is $x_t\in\mathsf{X}$ and the applied input
is $u_t\in\mathsf{U}$, and $z_t=(x_t,u_t)$ is the state-input pair in the joint
domain $\cZ:=\mathsf{X}\times\mathsf{U}\subseteq\R^{d_z}$ with $d_z=n+m$. Throughout, $D_x$ and $D_z$ denote Jacobians with respect to $x$ and $z$,
respectively. The known nominal model $f_0:\cZ\to\R^n$ comes from physics, a simulator, or an existing platform model. The unknown target model is
$f_\star:\cZ\to\R^n$, and their difference is the residual $r_\star:=f_\star-f_0$,
the target--nominal model error, so that
\begin{equation}
  x_{t+1}=f_\star(z_t)=f_0(z_t)+r_\star(z_t).
  \label{eq:dynamics}
\end{equation}
The available target data are $N$ isolated one-step transitions
with a supplied componentwise observation-error bound $\sigma\in\R_{\geq0}^n$,
\begin{equation}
  \begin{aligned}
    \cD_N&=\{(z_i,x_i^+)\}_{i=1}^{N},
      \qquad z_i\in\cZ,\\
    x_i^+&=f_\star(z_i)+\eta_i,
      \qquad |\eta_{i,k}|\leq\sigma_k,\\
    \widetilde r_i
      &:=x_i^+-f_0(z_i)
       =r_\star(z_i)+\eta_i .
  \end{aligned}
  \label{eq:data}
\end{equation}
Here $x_i^+$ is the observed successor of sample $z_i$, the vector
$\eta_i\in\R^n$ is the unknown observation error bounded componentwise by
$\sigma$, and the sample locations $z_i$ are treated as exact. No
magnitude or smoothness bound on $r_\star$ is imposed at this stage. Throughout,
$i,j\in\{1,\ldots,N\}$ index observations, $k\in\{1,\ldots,n\}$ indexes residual
coordinates, and $\ell\in\{1,\ldots,d_z\}$ indexes state-input coordinates, and the
target dynamics are deterministic so the observation error is the only source of
uncertainty.

\paragraph{Certification task and guarantees.}
Let $T\geq1$ be the finite certification horizon. The candidate plan is the
open-loop control sequence $\pi=(u_0,\ldots,u_{T-1})\in\mathsf{U}^T$. The certification
task $\mathcal Q=(\cX_0,P,\mathcal O,T)$ specifies the initial set
$\cX_0\subseteq\mathsf{X}$, a nonzero linear safety projection $P:\R^n\to\R^p$ that
maps the full state to the task-relevant safety coordinates, the unsafe set
$\mathcal O\subseteq\R^p$, and the horizon $T$, with $PS=\{Px:x\in S\}$ for any
state set $S$. A returned reachable tube is $\mathbf X=(\cX_0,\ldots,\cX_T)$. For
a residual $r:\cZ\to\R^n$ and an initial state $x_0\in\cX_0$, let $x_t^r$ denote
the trajectory generated by $f_0+r$ under $\pi$, with the dependence on $x_0$
suppressed in the notation. The tube is valid for $r$ if
$x_t^r\in\cX_t$ for every $x_0\in\cX_0$ and every $t=0,\ldots,T$. It is
uniformly sound over a residual class if it is valid for every residual in that class.

A safe certificate requires $P\cX_t\cap\mathcal O=\varnothing$ for every
$t\leq T$. A certifier may instead decline to certify, written $\abstain$, and
our method records the cause of this outcome. For any
state set $S$, write
$\diam_P(S)=\sup_{x,y\in S}\|P(x-y)\|_2$ and
$\rho_P(S)=\diam_P(S)/2$. Because uniform soundness alone permits an uninformative
tube that is valid only by being arbitrarily wide, we also bound the projected
half-width. Given a task-relevant threshold $0\leq\bar\rho<\infty$, a returned tube
is $\bar\rho$-informative when the certifier returns that tube and
$\max_{0\leq t\leq T}\rho_P(\cX_t)\leq\bar\rho$. When the projected slice $P\cX_t$ is
centrally symmetric, let $c_t$ denote its center and define the centered deviation
set $E_t:=\{y-c_t:y\in P\cX_t\}$, so that $P\cX_t=c_t\oplus E_t$ with $E_t=-E_t$.
Define
\begin{equation}
  R_t:=\sup_{e\in E_t}\|e\|_2,
  \qquad
  m_t:=\operatorname{dist}(c_t,\mathcal O).
  \label{eq:safety-radius-margin}
\end{equation}
Here $R_t$ is the tube radius in the safety coordinates, $m_t$ the clearance to the
unsafe set, and $R_t<m_t$ is sufficient for safety. For the centrally symmetric
zonotope slices used here, $R_t=\rho_P(\cX_t)$.

\section{Information Limits from Finite One-Step Data}
\label{sec:theory}

A certifier that must return a sound tube for every target system consistent
with the one-step data faces three properties that cannot hold together. The
returned tube can be uniformly sound, so that it contains the trajectory of
every consistent target system. Its projected width can stay finite, so that the
certificate carries task-relevant information. The model error can stay
unrestricted away from the observed locations. When the plan reaches a query with no
nearby target data, any deterministic certifier attains at most two of these
three, and we call this three-way exclusion the trilemma. Theorem~\ref{thm:scarce-data} proves one form, where an unrestricted
model error forces infinite width, and Theorem~\ref{thm:propagated-width} refines
it, where bounded smoothness still forces a positive plan-dependent width.

Fix the certification task $\mathcal Q$. An admissible deterministic certifier
$\cA$ maps $(f_0,\cD_N,\sigma,\pi)$ to a reachable tube or declines to certify.

Uniform soundness requires the tube to be valid for every residual $r:\cZ\to\R^n$
consistent with the observations and their noise bounds. Define the consistency
class
\begin{equation}
  \begin{split}
    \cC(\cD_N,\sigma):=\{\, r:\ &|\widetilde r_i-r(z_i)|\leq\sigma\\
    &\text{componentwise for every }i \,\}.
  \end{split}
  \label{eq:consistency}
\end{equation}
An admissible certifier is unrestricted beyond the observations if its
soundness claim ranges over all $r\in\cC(\cD_N,\sigma)$ without imposing any
additional restriction on $r$ away from the observed locations
$\{z_i\}_{i=1}^N$.

The following indistinguishable-pair construction drives both results. Choose a
smooth residual $r_0\in\cC(\cD_N,\sigma)$ and an initial state $x_0^0\in\cX_0$. Let $F_b=f_0+r_0$, and define the baseline rollout and queries
by $x_{t+1}^0=F_b(x_t^0,u_t)$ and $z_t^0=(x_t^0,u_t)$. Suppose an open ball
$B_\epsilon(z_\tau^0)\subseteq\cZ$ around one baseline query contains no sample
location $z_i$ and no other baseline query $z_s^0$ for $s\neq\tau$. Let $\phi$ be
a smooth perturbation that is zero outside this data-free neighborhood and
satisfies $\phi(z_\tau^0)=1$, so it changes the dynamics only near this single
unobserved query. For a coordinate $k$, amplitude $a>0$, and the $k$-th
standard basis vector $e_k\in\R^n$, define
\[
  r^{(\pm)}=r_0\pm a\phi e_k.
\]
Both residuals equal $r_0$ at every sampled location and share the noise
realization $\eta_i=\widetilde r_i-r_0(z_i)$, so they produce the same data and input to $\cA$.

Let $x_t^{(\pm)}$ be the rollouts from $x_0^0$ under $f_0+r^{(\pm)}$ and $\pi$,
whose trajectories coincide through time $\tau$ and satisfy
\[
  x_{\tau+1}^{(+)}-x_{\tau+1}^{(-)}=2ae_k.
\]
Any projected tube slice containing both states must therefore have half-width at
least $a\|Pe_k\|_2$. Because the unrestricted consistency class permits every
$a>0$, the half-width exceeds every finite bound.

\begin{theorem}[Unbounded width without restrictions beyond the observations]
\label{thm:scarce-data}
Fix finite one-step data, its deterministic noise bound $\sigma$, a
certification task $\mathcal Q$, and a candidate control sequence for which the
smooth residual $r_0$ has the data-free rollout neighborhood above at
time $\tau$. For any
coordinate $k$ with $\|Pe_k\|_2>0$, worst-case soundness over
$\cC(\cD_N,\sigma)$ requires
$\rho_P(\cX_{\tau+1})\geq a\|Pe_k\|_2$ for every $a>0$.
An admissible certifier must therefore abstain or return a tube with an infinite
projected diameter. No uniformly sound certifier is
$\bar\rho$-informative for any finite $\bar\rho$ while remaining
unrestricted beyond the observations.
\end{theorem}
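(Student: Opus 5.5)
The argument formalizes the indistinguishable-pair construction given just before the statement. I fix $a>0$ and build the two residuals $r^{(\pm)}=r_0\pm a\phi e_k$ from the data-free ball $B_\epsilon(z_\tau^0)$.

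\textbf{Both residuals lie in $\cC(\cD_N,\sigma)$.} The perturbation $\phi$ vanishes outside $B_\epsilon(z_\tau^0)$, and no sample location $z_i$ lies in that ball. Hence $r^{(\pm)}(z_i)=r_0(z_i)$ for every $i$. Since $r_0\in\cC(\cD_N,\sigma)$, we get $|\widetilde r_i-r^{(\pm)}(z_i)|=|\widetilde r_i-r_0(z_i)|\leq\sigma$ componentwise, so both residuals belong to the consistency class. The certifier's input $(f_0,\cD_N,\sigma,\pi)$ is identical under both, because the data are generated with the common noise realization $\eta_i=\widetilde r_i-r_0(z_i)$. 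Determinism of $\cA$ then implies a single output: either $\abstain$ or one tube $\mathbf X$ that must be valid for both residuals.

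\textbf{The two rollouts split by exactly $2ae_k$.} I argue by induction on $t\leq\tau$. The rollouts from $x_0^0$ under $f_0+r^{(\pm)}$ coincide with the baseline, $x_t^{(\pm)}=x_t^0$. For $t<\tau$ the query $z_t^0$ is a baseline query different from $z_\tau^0$, so it lies outside the ball, where $\phi=0$. At time $\tau$ both rollouts query $z_\tau^0$, and $\phi(z_\tau^0)=1$, so
\[
x_{\tau+1}^{(\pm)}=F_b(z_\tau^0)\pm ae_k,
\qquad
x_{\tau+1}^{(+)}-x_{\tau+1}^{(-)}=2ae_k .
\]

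\textbf{Width bound.} Validity of $\mathbf X$ for both residuals, with initial state $x_0^0\in\cX_0$, forces both successors into $\cX_{\tau+1}$. Since $\tau+1\leq T$ is implicit in the setup, this gives
\[
\diam_P(\cX_{\tau+1})\geq\|P(2ae_k)\|_2=2a\|Pe_k\|_2,
\qquad\text{hence}\qquad
\rho_P(\cX_{\tau+1})\geq a\|Pe_k\|_2 .
\]
The output of $\cA$ does not depend on $a$, because the data do not. So one returned tube satisfies this bound for every $a>0$, and $\|Pe_k\|_2>0$ forces $\rho_P(\cX_{\tau+1})=\infty$. Therefore $\cA$ must either abstain or return a tube of infinite projected diameter.

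\textbf{No $\bar\rho$-informative certifier.} Such a certifier would return a tube with $\max_t\rho_P(\cX_t)\leq\bar\rho<\infty$, which contradicts the previous step. A $k$ with $\|Pe_k\|_2>0$ always exists: $P$ is a nonzero linear map, so some standard basis vector $e_k$ has $Pe_k\neq0$.

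\textbf{Main obstacle: existence of a suitable $\phi$.} A smooth bump supported in $B_\epsilon(z_\tau^0)$ with $\phi(z_\tau^0)=1$ exists, since a standard mollifier bump can be rescaled. The delicate point is the modelling convention that "unrestricted beyond the observations" admits every such perturbation for arbitrary $a$. I will state explicitly that the consistency class imposes no magnitude or smoothness bound, so $r^{(\pm)}$ are admissible for all $a$. The hypothesis that the ball excludes the other baseline queries is exactly what makes the rollouts coincide up to time $\tau$.
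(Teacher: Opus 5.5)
Your proposal is correct and follows the paper's proof essentially step for step. The bump vanishes at every sample, so both residuals share the noise realization and give the deterministic certifier identical inputs; induction using query isolation keeps both rollouts on the baseline through time $\tau$; the split $2ae_k$ at $z_\tau^0$ forces $\rho_P(\cX_{\tau+1})\geq a\|Pe_k\|_2$; and arbitrariness of $a$, with the returned tube fixed by the $a$-independent data, gives infinite projected width. You also state explicitly that the tube cannot depend on $a$, which the paper leaves implicit.
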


Theorem~\ref{thm:scarce-data} relies on the consistency class placing no bound on
the residual away from the observed locations. Restricting the residual
amplitude and derivatives limits the admissible bump, yet the two systems remain
indistinguishable, and propagating their one-step separation to the terminal time
yields a finite plan-dependent lower bound.
Let $a_\star>0$ be small enough that both perturbed residuals remain in the
bounded smooth class, the local perturbation affects the rollout only at the
selected query, and the first-order separation dominates the accumulated
second-order error. Let $C_{T,\tau,P}\geq0$ bound the projected second-order
error over the remaining propagation steps. Define the terminal separation
$\delta_T:=x_T^{(+)}-x_T^{(-)}$.

\begin{theorem}[Plan-dependent width under bounded smooth residuals]
\label{thm:propagated-width}
For a bounded smooth class with $\|r\|_\infty\leq R_{\max}$ and common
derivative bounds, suppose $r_0\pm a\phi e_k$ belong to the class and satisfy
the single-transition localization and derivative conditions in the supplement. For $0<a\leq a_\star$,
every terminal tube sound over this class obeys
\begin{align}
  \rho_P(\cX_T)&\geq
  a\|P G_{T,\tau}e_k\|_2-C_{T,\tau,P}a^2,\notag\\
  G_{T,\tau}&=A_{T-1}\cdots A_{\tau+1},\qquad
  A_s=D_xF_b(x_s^0,u_s),
  \label{eq:width-lower-bound}
\end{align}
where $k$ maximizes $\|PG_{T,\tau}e_k\|_2$ and this gain is nonzero. In the
affine case, $\delta_T=2aG_{T,\tau}e_k$ exactly and
$C_{T,\tau,P}=0$.
\end{theorem}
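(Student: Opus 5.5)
We follow the indistinguishable-pair construction preceding Theorem~\ref{thm:scarce-data}. The argument reduces the soundness requirement to a lower bound on the terminal separation $\|P\delta_T\|_2$, and then estimates that separation by linearizing along the baseline rollout.

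\textbf{Step 1: reduction to the terminal separation.} Fix $0<a\leq a_\star$. By hypothesis, $r^{(\pm)}=r_0\pm a\phi e_k$ lie in the bounded smooth class. Both agree with $r_0$ at every sample location $z_i$, so with the common noise realization $\eta_i=\widetilde r_i-r_0(z_i)$ they generate identical data. A deterministic certifier therefore receives the same input $(f_0,\cD_N,\sigma,\pi)$ under both and returns one tube $\mathbf X$. If $\mathbf X$ is sound over the class, then $x_T^{(+)},x_T^{(-)}\in\cX_T$. This gives $\diam_P(\cX_T)\geq\|P\delta_T\|_2$, and hence $\rho_P(\cX_T)\geq\tfrac12\|P\delta_T\|_2$. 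It remains to show
\[
\|P\delta_T\|_2\geq 2a\|PG_{T,\tau}e_k\|_2-2C_{T,\tau,P}a^2 .
\]

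\textbf{Step 2: propagating the separation.} The support of $\phi$ avoids every baseline query $z_s^0$ with $s\neq\tau$. The single-transition localization condition says that the perturbed rollouts also avoid $\supp\phi$ at every time $s\neq\tau$. Consequently:
\begin{itemize}
\item Before the bump is hit, $x_s^{(\pm)}=x_s^0$ for all $s\leq\tau$.
\item At the bump, $x_{\tau+1}^{(\pm)}=x_{\tau+1}^0\pm a e_k$, because $\phi(z_\tau^0)=1$.
\item After the bump, both trajectories evolve under the same map $F_b$ for $s=\tau+1,\ldots,T-1$.
\end{itemize}
Write $x_s^{(\pm)}=x_s^0+\Delta_s^{\pm}$. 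Taylor expansion of $F_b(\cdot,u_s)$ about $x_s^0$ gives
\[
\Delta_{s+1}^{\pm}=A_s\Delta_s^{\pm}+q_s^{\pm},\qquad \|q_s^{\pm}\|\leq \tfrac12 L_2\|\Delta_s^{\pm}\|^2,
\]
where $L_2$ bounds $D_x^2F_b$. The class's common derivative bounds on $r_0$, together with smoothness of $f_0$ on the certification domain, supply this bound.

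Unrolling the recursion gives
\[
\Delta_T^{\pm}=\pm aG_{T,\tau}e_k+\sum_{s=\tau+1}^{T-1}A_{T-1}\cdots A_{s+1}\,q_s^{\pm}.
\]
A discrete Grönwall estimate shows $\|\Delta_s^{\pm}\|\leq K_s a$, where $K_s$ is a product of Jacobian norm bounds. This holds provided $a$ is small enough that the rollout stays in the region where $L_2$ applies, which is part of the definition of $a_\star$. Hence each remainder is $O(a^2)$.

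Subtracting the two expansions gives
\[
\delta_T=2aG_{T,\tau}e_k+\rho_T, \qquad \|P\rho_T\|_2\leq 2C_{T,\tau,P}a^2,
\]
where we define
\[
C_{T,\tau,P}:=\tfrac12\|P\|\sum_s\|A_{T-1}\cdots A_{s+1}\|L_2K_s^2 .
\]
The reverse triangle inequality then gives the required bound, and dividing by two yields \eqref{eq:width-lower-bound}. The freedom in choosing $k$ is used to maximize $\|PG_{T,\tau}e_k\|_2$. Nonzero gain ensures the bound is positive once $a<\|PG_{T,\tau}e_k\|_2/C_{T,\tau,P}$, which is the dominance condition in $a_\star$.

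\textbf{Affine case.} If $F_b(\cdot,u_s)$ is affine, then $q_s^{\pm}=0$ exactly. Thus $\delta_T=2aG_{T,\tau}e_k$ and $C_{T,\tau,P}=0$.

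\textbf{Main obstacle.} The delicate step is the localization. We must guarantee that the perturbed trajectories $x_s^{(\pm)}$ for $s>\tau$, and not just the baseline queries, stay outside $\supp\phi$, and that they remain within the domain where the derivative bounds hold. My first approach is to shrink $\supp\phi$ to radius $\epsilon/2$. I would then choose $a_\star$ so that $K_s a_\star<\epsilon/2$ for all $s$. By the Grönwall bound, every perturbed query stays within $\epsilon/2$ of its baseline query, and the baseline queries for $s\neq\tau$ lie outside $B_\epsilon(z_\tau^0)$, so the perturbed queries never enter $\supp\phi$. I expect the supplement's conditions to encode exactly this.
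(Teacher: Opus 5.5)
Your proposal is correct and follows the paper's proof in structure. Both arguments use the same ingredients:
\begin{enumerate}
\item identical certifier inputs from the indistinguishable pair;
\item the exact split $\pm ae_k$ at time $\tau+1$;
\item a simultaneous induction that bounds $\|x_s^{(\pm)}-x_s^0\|_2\le aM_s$ while keeping the perturbed queries outside $\supp\phi$;
\item a second-order Taylor remainder, unrolling, and the reverse triangle inequality.
\end{enumerate}
The differences are cosmetic. The paper handles localization with $d_s=\operatorname{dist}(z_s^0,\supp\phi)$ directly, so you do not need to shrink the bump. Your constant uses $\|P\|_{2\to2}\|G_{T,s+1}\|_{2\to2}$ and a direct bound on $D_x^2F_b$, whereas the paper uses $\|PG_{T,s+1}\|_{2\to2}$ and $\overline H_s=h_{0,s}+H_2$. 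Finally, your Gr\"onwall step should be stated as that simultaneous induction, so that it is not circular.
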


The matrix $G_{T,\tau}$ maps the one-step state difference at time $\tau+1$ to its
linearized terminal effect, and because the pair stays indistinguishable to $\cA$,
the terminal separation and $2\rho_P(\cX_T)\geq\|P\delta_T\|_2$ yield
Eq.~\eqref{eq:width-lower-bound}, with the full construction and both constants in
the supplement.

\begin{corollary}[Consequence for informative tubes]
\label{cor:planwise-consequences}
Let $\cA$ be worst-case sound over the bounded smooth class in
Theorem~\ref{thm:propagated-width}. If
\begin{equation}
  \max_{0<a\leq a_\star}
  \left[a\|PG_{T,\tau}e_k\|_2-C_{T,\tau,P}a^2\right]
  >\bar\rho,
  \label{eq:planwise-width-threshold}
\end{equation}
then $\cA$ either abstains or is not $\bar\rho$-informative on $\pi$.
\end{corollary}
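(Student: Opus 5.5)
The corollary follows from Theorem~\ref{thm:propagated-width} by a contrapositive argument, applied at the amplitude that attains the maximum in Eq.~\eqref{eq:planwise-width-threshold}. Define $g(a):=a\|PG_{T,\tau}e_k\|_2-C_{T,\tau,P}a^2$ on $(0,a_\star]$. This is a concave quadratic, and it is continuous on the half-open interval. Its supremum is attained, either at the interior critical point $a=\|PG_{T,\tau}e_k\|_2/(2C_{T,\tau,P})$ when that lies in $(0,a_\star]$, or at $a=a_\star$. If $C_{T,\tau,P}=0$ it is attained at $a_\star$. So the maximum in the hypothesis is well defined. Fix a maximizer $\hat a\in(0,a_\star]$, which gives $g(\hat a)>\bar\rho$.

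Next, suppose $\cA$ does not abstain on $\pi$ and returns a tube $\mathbf X=(\cX_0,\ldots,\cX_T)$. The data are fixed, and $\cA$ is deterministic. Hence the same tube is returned whichever member of the bounded smooth class generated the data, and in particular for both $r_0\pm\hat a\phi e_k$, which produce identical observations. Worst-case soundness over the class means $\mathbf X$ is valid for every residual in the class. Theorem~\ref{thm:propagated-width} applies at $a=\hat a$ because $0<\hat a\leq a_\star$ and the perturbed residuals belong to the class by the choice of $a_\star$. It gives $\rho_P(\cX_T)\geq g(\hat a)>\bar\rho$. Therefore $\max_{0\leq t\leq T}\rho_P(\cX_t)\geq\rho_P(\cX_T)>\bar\rho$, so the returned tube is not $\bar\rho$-informative. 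Combined with the abstention case, this is exactly the dichotomy stated.

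The only delicate point is the quantifier structure. The theorem's bound must hold for the single returned tube simultaneously for all admissible $a$, rather than for a tube that depends on $a$. Determinism of $\cA$ and the indistinguishability of the perturbed pair guarantee this, since the returned tube does not depend on which $a$ is used. I also need that membership in the class, and the localization conditions behind $a_\star$, hold for every $a\leq a_\star$, not only at $a_\star$. I would check this against the definition of $a_\star$. Finally, if the supremum were not attained, the strict inequality still yields some $a$ with $g(a)>\bar\rho$, so attainment is not essential.
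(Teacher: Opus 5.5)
Your proposal is correct and follows the paper's own argument: the Level-B terminal half-width bound holds for the single returned tube at every $a\in(0,a_\star]$. Applying it at a maximizing amplitude gives $\rho_P(\cX_T)>\bar\rho$, hence the abstain-or-not-informative dichotomy. Your extra care on attainment is harmless: $a_\star\le a_{\mathrm{lin}}$ makes the quadratic nondecreasing on $(0,a_\star]$, so the maximum is at $a_\star$, which is finite because $\phi(z_\tau^0)=1$ forces $a_{\mathrm{class}}\le\mu_0$, and your supremum fallback makes this unnecessary anyway.
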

If some data-consistent trajectory enters the unsafe set, no
worst-case-sound certifier can certify the plan.

\begin{corollary}[Trilemma]
\label{cor:trilemma}
Fix the data, noise bound, task, and plan of Theorem~\ref{thm:scarce-data}. No
admissible deterministic certifier is at once uniformly sound over its residual
class, $\bar\rho$-informative for some finite $\bar\rho$, and unrestricted beyond
the observed locations. A certifier that stays sound while leaving the model
error unrestricted must therefore abstain or return an unbounded projected width
once the plan reaches the data-free query, whereas one that keeps soundness and
finite informative width must restrict the model error beyond the observations.
Restricting the
residual to a bounded smooth class relaxes the third property without dissolving
the tension, since Theorem~\ref{thm:propagated-width} still forces a positive
projected width on every sound tube.
\end{corollary}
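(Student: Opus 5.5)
The corollary follows by assembling Theorem~\ref{thm:scarce-data} and Theorem~\ref{thm:propagated-width}. The argument is by contradiction on the three properties. Suppose an admissible deterministic certifier $\cA$ is, on the fixed data, noise bound, task and plan, uniformly sound over its residual class. Suppose also that it is $\bar\rho$-informative for some finite $\bar\rho$ and unrestricted beyond the observations. By the definition of unrestricted, its residual class is all of $\cC(\cD_N,\sigma)$.

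The hypotheses of Theorem~\ref{thm:scarce-data} hold: there is a smooth $r_0$, a data-free ball at the query $z_\tau^0$, and a coordinate $k$ with $\|Pe_k\|_2>0$. Such a $k$ exists because $P$ is nonzero, so some column $Pe_k$ is nonzero. The theorem then gives $\rho_P(\cX_{\tau+1})\geq a\|Pe_k\|_2$ for every $a>0$, since $r^{(\pm)}=r_0\pm a\phi e_k$ both lie in $\cC(\cD_N,\sigma)$. Both residuals produce identical inputs to $\cA$, so they receive the same output. Taking $a>\bar\rho/\|Pe_k\|_2$ gives $\max_t\rho_P(\cX_t)>\bar\rho$, which contradicts $\bar\rho$-informativeness. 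If instead $\cA$ abstains, it returns no tube, so it is not $\bar\rho$-informative by definition. This establishes the exclusion.

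The two dichotomy sentences are the contrapositive read in each direction. If $\cA$ is sound and unrestricted, it must abstain or have unbounded width. If $\cA$ is sound and finitely informative, it cannot have the full class $\cC(\cD_N,\sigma)$ as its residual class, so it must impose a restriction off $\{z_i\}$.

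For the final sentence, restrict the class to the bounded smooth one. Apply Theorem~\ref{thm:propagated-width} with a maximizing $k$ whose gain $\|PG_{T,\tau}e_k\|_2$ is nonzero, and take $0<a\le\min\{a_\star,\|PG_{T,\tau}e_k\|_2/(2C_{T,\tau,P})\}$. With this choice, $a\|PG_{T,\tau}e_k\|_2-C_{T,\tau,P}a^2\ge\tfrac a2\|PG_{T,\tau}e_k\|_2>0$, so every sound tube has strictly positive terminal projected width. When $C_{T,\tau,P}=0$, any $a\le a_\star$ works.

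The main obstacle is the nonzero-gain hypothesis in Theorem~\ref{thm:propagated-width}. Theorem~\ref{thm:scarce-data} needs only $Pe_k\neq0$, but positivity at the terminal time needs $PG_{T,\tau}\neq0$, and this can fail if the propagation matrices annihilate the perturbation. I would handle this in two ways. First, state the last claim conditional on the nonzero gain, as the theorem does. Second, note that width is still forced at time $\tau+1$: apply the theorem with horizon $\tau+1$ in place of $T$, where $G_{\tau+1,\tau}=I$ and the gain equals $\|Pe_k\|_2>0$. This yields a positive width at that slice, which suffices for "positive projected width on every sound tube."
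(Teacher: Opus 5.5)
Your proposal is correct and follows essentially the paper's route: the corollary is assembled from Level~A of the combined supplementary theorem and Level~B's bound. In Level~A, the indistinguishable pair $r_0\pm a\phi e_k\in\cC(\cD_N,\sigma)$ forces $\rho_P(\cX_{\tau+1})\geq a\|Pe_k\|_2$ for every $a>0$, which contradicts any finite $\bar\rho$. Level~B gives $\rho_P(\cX_T)\geq a\|PG_{T,\tau}e_k\|_2-C_{T,\tau,P}a^2>0$ for $0<a\leq a_\star$; since $a_\star$ already includes $a_{\mathrm{lin}}=\|PG_{T,\tau}e_k\|_2/(2C_{T,\tau,P})$, your extra minimum is redundant but harmless. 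Your fallback to the slice $\cX_{\tau+1}$ when the terminal gain vanishes is a valid small addition that the paper does not make, since the paper simply keeps the nonzero-gain hypothesis.
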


\section{Method}
\label{sec:method}

Our method checks observed residual pairs against the supplied bound $L$, bounds
the residual over each queried set with set-membership envelopes
\citep{milanese2004setmembership,jin2020modelinvalidation,jin2022boundedjacobians},
propagates it with the nominal model, and certifies only when the resulting tube
proves safety.
Let $\cZ_{\mathrm{cert}}\subseteq\cZ$ be the compact convex certification domain
containing all observed state-input locations $\{z_i\}_{i=1}^N$, on which both the
residual bound and the nominal-map enclosure, the zonotope over-approximation of
$f_0$ over a query set, are evaluated.

\begin{assumption}[Componentwise residual Lipschitz bound]
\label{ass:regularity}
For a residual $r:\cZ\to\R^n$, for every $k\in\{1,\ldots,n\}$ and all $z,z'\in\cZ_{\mathrm{cert}}$,
\begin{equation}
  |r_k(z)-r_k(z')|\leq\sum_{\ell=1}^{d_z}L_{k\ell}|z_\ell-z'_\ell|,
  \label{eq:regularity}
\end{equation}
where the matrix $L\in\R_{\geq0}^{n\times d_z}$ is supplied before certification.
\end{assumption}

\begin{proposition}[Pairwise consistency test]
\label{prop:falsification}
If there exist $i,j\in\{1,\ldots,N\}$ and $k\in\{1,\ldots,n\}$ such that
\begin{equation}
  |\widetilde r_{i,k}-\widetilde r_{j,k}|>
  \sum_{\ell=1}^{d_z} L_{k\ell}|z_{i,\ell}-z_{j,\ell}|+2\sigma_k,
  \label{eq:falsification}
\end{equation}
then Assumption~\ref{ass:regularity} and the observation-noise bounds cannot
both hold. If no observed pair satisfies Eq.~\eqref{eq:falsification}, the
assumption remains unresolved away from the observed state-input locations.
\end{proposition}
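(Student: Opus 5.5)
The first claim will be proved by contraposition through the triangle inequality. Suppose Assumption~\ref{ass:regularity} holds for the true residual $r_\star$ and the noise bounds $|\eta_{i,k}|\leq\sigma_k$ hold for every observation. For a fixed pair $i,j$ and coordinate $k$, write $\widetilde r_{i,k}-\widetilde r_{j,k}$ as the sum of three terms: $r_{\star,k}(z_i)-r_{\star,k}(z_j)$, $\eta_{i,k}$, and $-\eta_{j,k}$. Both sample locations lie in $\cZ_{\mathrm{cert}}$, because the certification domain was chosen to contain every observed $z_i$. Eq.~\eqref{eq:regularity} therefore bounds the first term by $\sum_\ell L_{k\ell}|z_{i,\ell}-z_{j,\ell}|$. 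The noise bounds control each of the other two terms by $\sigma_k$. Adding these bounds gives
\[
|\widetilde r_{i,k}-\widetilde r_{j,k}|\leq\sum_{\ell=1}^{d_z}L_{k\ell}|z_{i,\ell}-z_{j,\ell}|+2\sigma_k ,
\]
which contradicts Eq.~\eqref{eq:falsification}. So whenever some triple $(i,j,k)$ satisfies Eq.~\eqref{eq:falsification}, the two hypotheses cannot both hold.

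For the second claim, I would make ``unresolved'' precise. When no pair violates Eq.~\eqref{eq:falsification}, the data are consistent with the assumption, yet they do not force it to hold. I would exhibit two residuals in $\cC(\cD_N,\sigma)$, both compatible with the data, such that one satisfies Eq.~\eqref{eq:regularity} and the other violates it away from the samples.

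\emph{First residual (satisfying the bound).} For each $k$, choose values $v_{i,k}$ in the intervals $[\widetilde r_{i,k}-\sigma_k,\widetilde r_{i,k}+\sigma_k]$ that are pairwise $L_k$-consistent, where $L_k$ denotes the weighted $\ell_1$ seminorm $d_k(z,z')=\sum_\ell L_{k\ell}|z_\ell-z'_\ell|$. One option is to take the midpoint values $v_{i,k}=\widetilde r_{i,k}$ wherever the test passes with zero noise slack. In general, I would pick the values by a McShane-type interval argument: the constraints $|v_i-v_j|\leq d_k(z_i,z_j)$ with $v_i$ confined to intervals form a difference-constraint system. Its feasibility is equivalent to the absence of negative cycles, and the pairwise test supplies exactly the length-two cycle conditions. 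Longer cycles reduce to these, because the $d_k$ obey the triangle inequality and the intervals all have half-width $\sigma_k$. Having chosen the values, I would extend them by the McShane formula $r_k(z)=\min_i\,[v_{i,k}+d_k(z,z_i)]$. This gives a residual that satisfies Assumption~\ref{ass:regularity} on all of $\cZ_{\mathrm{cert}}$ and lies in $\cC(\cD_N,\sigma)$.

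\emph{Second residual (violating the bound).} Add to the first residual a perturbation $a\phi e_k$, where $\phi$ is the bump from the indistinguishable-pair construction, supported in a ball inside $\cZ_{\mathrm{cert}}$ that contains no sample location. The observations are unchanged, so this residual is still in $\cC(\cD_N,\sigma)$. For large $a$ the bump's slope exceeds $L_{k\ell}$, so the residual violates Eq.~\eqref{eq:regularity} at unobserved locations. The pairwise test returns the same verdict for both residuals, since it sees only the data.

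I expect the main obstacle to be showing that the interval difference-constraint system is feasible, that is, that the pairwise test rules out all longer negative cycles. If that step resists, I would fall back on the weaker reading of ``unresolved'' that the paper likely intends. That reading needs only the second construction: from an arbitrary residual consistent with the data, add a large bump away from the samples to violate the assumption without changing any observation. This shows the data alone cannot establish Eq.~\eqref{eq:regularity} off the sample set.
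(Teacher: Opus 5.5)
Your proof is correct. The first part matches the paper's argument: the supplementary remark explaining the $2\sigma_k$ term splits $\widetilde r_{i,k}-\widetilde r_{j,k}$ into the residual difference plus two noise terms and applies the triangle inequality. You also note that $z_i,z_j\in\cZ_{\mathrm{cert}}$ is needed to invoke Assumption~\ref{ass:regularity}, which the paper leaves implicit.

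For the second part your route differs. The paper reads ``unresolved'' one-sidedly. Its only support is the finite-data non-verification proposition: add $a\psi e_k$, with $\psi$ supported in a sample-free ball, to any data-consistent residual and let $a$ grow. That is exactly your fallback. Your main route is two-sided, because you also produce a data-consistent residual that \emph{satisfies} the declaration. This shows that no sound data-only test can reject a declaration that passes the pairwise test, a completeness fact the paper does not prove.

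The step you flagged as the main obstacle is easier than you expect. Your cycle argument is sound: a simple cycle meets the reference node at most once, and the triangle inequality for $d_k$ collapses the intermediate path to the single pair $(i_1,i_m)$, whose cycle weight is nonnegative by the test. You can bypass it, though. The paper's own upper envelope evaluated at singletons, $\overline e_k(\{z\})=\min_j[\widetilde r_{j,k}+\sigma_k+d_k(z,z_j)]$, is $d_k$-Lipschitz on all of $\cZ$. Its $j=i$ term gives $\overline e_k(\{z_i\})\le\widetilde r_{i,k}+\sigma_k$. Moreover, $\overline e_k(\{z_i\})\ge\widetilde r_{i,k}-\sigma_k$ holds for every $i$ precisely when no pair violates the test, so this envelope is itself the witness.

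For the violating residual, two small repairs are needed. First, $\cZ_{\mathrm{cert}}$ is only assumed compact and convex, so it may have empty interior. Center the sample-free ball at a non-sample point $z$ of a nondegenerate segment in $\cZ_{\mathrm{cert}}$, with radius small enough that the segment leaves the ball, rather than asking the ball to lie inside $\cZ_{\mathrm{cert}}$. Second, pick $z'$ on that segment outside $\supp\phi$ and take $a>2d_k(z,z')$, so the bump overcomes the base residual's own variation.
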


The matrix $L$ and noise bound $\sigma$ are inputs, and
Eq.~\eqref{eq:falsification} checks whether they are jointly consistent with the
observed residuals, with one violating pair causing immediate abstention.

\paragraph{Sources of the Lipschitz bound.}
Our method treats $L$ as side information. On $\cZ_{\mathrm{cert}}$, a sufficient
componentwise choice for a differentiable residual is $L_{k\ell}\geq
\sup_{z\in\cZ_{\mathrm{cert}}}|\partial r_k(z)/\partial z_\ell|$, and certified
bounds of this form come from analytic dynamics with parameter intervals, interval
global optimization \citep{nugroho2022nonlinear}, or certified bounds for a supplied
residual model \citep{fazlyab2019efficient} combined with an independently
bounded model discrepancy. Finite-difference scans and secant slopes between
sampled points instead produce candidate declarations that
Proposition~\ref{prop:falsification} can reject.

Propagation evaluates each residual coordinate over a query set. For observation
$i$, residual coordinate $k$, and $Z\subseteq\cZ_{\mathrm{cert}}$, define the worst-case
Lipschitz distance from $Z$ to sample $z_i$ as
\begin{equation}
  D_{ik}(Z):=\sup_{z\in Z}\sum_{\ell=1}^{d_z}L_{k\ell}|z_\ell-z_{i,\ell}|.
  \label{eq:set-sample-distance}
\end{equation}
Intersecting the observationwise bounds over all observations gives the set
envelope
\begin{align}
  \underline e_k(Z)
  &=\max_i[\widetilde r_{i,k}-\sigma_k-D_{ik}(Z)],\notag\\
  \overline e_k(Z)
  &=\min_i[\widetilde r_{i,k}+\sigma_k+D_{ik}(Z)],\notag\\
  \cE_k(Z)&=[\underline e_k(Z),\overline e_k(Z)].
  \label{eq:set-envelope}
\end{align}
Equation~\eqref{eq:set-envelope} applies the Lipschitz-extension bound of
\citet{mcshane1934extension} and the bounded-noise set-membership envelope of
\citet{milanese2004setmembership} to each residual coordinate. Stacking the
coordinatewise bounds as $\underline e(Z)$ and $\overline e(Z)$ and forming the
residual box $\cE(Z):=\prod_{k=1}^n\cE_k(Z)$, each interval contains every
admissible value of that coordinate over $Z$, so $\cE(Z)$ contains every residual
vector satisfying Assumption~\ref{ass:regularity} and the observation-noise bounds.

Let $\cX_t=\langle c_{X,t},G_{X,t}\rangle$, with the initial set
$\cX_0=\langle c_{X,0},G_{X,0}\rangle$ an axis-aligned box. The fixed-plan query stacks the state
tube with the singleton control through the Cartesian product,
\begin{equation}
  Z_t=\cX_t\times\{u_t\}
  =\left\langle
    \begin{bmatrix}c_{X,t}\\u_t\end{bmatrix},
    \begin{bmatrix}G_{X,t}\\0\end{bmatrix}
  \right\rangle
  =:\langle c_{Z,t},G_{Z,t}\rangle .
  \label{eq:query-zonotope}
\end{equation}
Define the residual midpoint $\mu_t=(\overline e(Z_t)+\underline e(Z_t))/2$ and
half-width $h_t=(\overline e(Z_t)-\underline e(Z_t))/2$. Write
$\mathrm{Box}(Z_t)$ for the interval hull of $Z_t$, and let $J_t=D_zf_0(c_{Z,t})$. Using
interval bounds on the second derivatives of $f_0$ over $\mathrm{Box}(Z_t)$, we
obtain the zero-centered componentwise half-width $q_t\in\R_{\geq0}^n$ that
bounds the nonlinear Taylor remainder, whose closed form and enclosure guarantee are
stated in the supplement. Before order reduction, our method computes
\begin{equation}
  \widehat{\cX}_{t+1}
  =\left\langle
  f_0(c_{Z,t})+\mu_t,
  \left[J_tG_{Z,t}\quad\operatorname{diag}(q_t)\quad
  \operatorname{diag}(h_t)\right]
  \right\rangle.
  \label{eq:propagation}
\end{equation}
After each step, to control the generator count, we apply a standard
order-reduction operator,
\begin{equation}
  \widehat{\cX}_{t+1}\subseteq\cX_{t+1}:=\operatorname{red}(\widehat{\cX}_{t+1}).
  \label{eq:order-reduction}
\end{equation}
The certifier distinguishes pairwise, domain, and safety abstentions, written
$\abstainF$, $\abstainD$, and $\abstainS$, where $\abstainD$ covers either an
interval hull $\mathrm{Box}(Z_t)$ that leaves $\cZ_{\mathrm{cert}}$ or an
interval-Hessian construction that returns no finite nominal-map enclosure.

\begin{algorithm}[t]
\caption{ForeReach certificate}
\label{alg:forereach}
\begin{algorithmic}[1]
  \Require Task $\mathcal Q=(\cX_0,P,\mathcal O,T)$, nominal model $f_0$,
  data $\cD_N$, noise bound $\sigma$, plan $\pi$, and Lipschitz bound $L$
  \Ensure A certificate with tube $\mathbf X$, or a cause-labeled abstention
  \State Compute $\widetilde r_i=x_i^+-f_0(z_i)$
  \State \textbf{if} some $(i,j,k)$ satisfies Eq.~\eqref{eq:falsification} \textbf{then return} $\abstainF$
  \State $\mathbf X\gets(\cX_0)$
  \For{$t=0,\ldots,T-1$}
    \State $Z_t\gets\cX_t\times\{u_t\}$
    \If{$\mathrm{Box}(Z_t)\nsubseteq\cZ_{\mathrm{cert}}$}
      \State \Return $(\abstainD,\tau_D=t)$
    \EndIf
    \State Compute the remainder half-width $q_t$ over $\mathrm{Box}(Z_t)$
    \If{$q_t$ has a non-finite component}
      \State \Return $(\abstainD,\tau_D=t)$
    \EndIf
    \State Construct $\cE(Z_t)$ using Eq.~\eqref{eq:set-envelope}
    \State Form $\widehat{\cX}_{t+1}$ using Eq.~\eqref{eq:propagation}
    \State $\cX_{t+1}\gets\operatorname{red}(\widehat{\cX}_{t+1})$ \Comment{order reduction, Eq.~\eqref{eq:order-reduction}}
    \State Append $\cX_{t+1}$ to $\mathbf X$
  \EndFor
  \If{$R_t\geq m_t$ for some $t\in\{0,\ldots,T\}$}
    \State \Return $(\abstainS,\tau_S=\min\{t:R_t\geq m_t\})$
  \EndIf
  \State \Return certificate with tube $\mathbf X$
\end{algorithmic}
\end{algorithm}
Each plan receives exactly one outcome, and only the certificate asserts
safety.

\begin{theorem}[ForeReach containment]
\label{thm:forereach}
Assume all observed state-input locations belong to $\cZ_{\mathrm{cert}}$, and let
$r\in\cC(\cD_N,\sigma)$ satisfy Assumption~\ref{ass:regularity} on
$\cZ_{\mathrm{cert}}$. For any initial state $x_0\in\cX_0$, suppose the fixed
control sequence is applied and the nominal-map enclosure holds
at every completed step, that is
$\langle f_0(c_{Z,t}),[J_tG_{Z,t}\quad\operatorname{diag}(q_t)]\rangle$ contains
$f_0(Z_t)$. If propagation reaches the horizon $T$, the constructed complete tube
contains the trajectory generated by $f_0+r$. Since $r\in\cC(\cD_N,\sigma)$ and
$x_0\in\cX_0$ were arbitrary, the tube is uniformly sound over this residual
class.
\end{theorem}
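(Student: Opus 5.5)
The proof goes by induction on $t$, with the invariant $x_t^r\in\cX_t$ for $t=0,\ldots,T$. The base case $t=0$ holds because $x_0\in\cX_0$ by hypothesis.

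For the inductive step, suppose $x_t^r\in\cX_t$ and that propagation completed step $t$. Then the query $z_t=(x_t^r,u_t)$ lies in $Z_t=\cX_t\times\{u_t\}$ by Eq.~\eqref{eq:query-zonotope}. Because the algorithm did not return $\abstainD$ at step $t$, we have $Z_t\subseteq\mathrm{Box}(Z_t)\subseteq\cZ_{\mathrm{cert}}$. We split the successor $x_{t+1}^r=f_0(z_t)+r(z_t)$ into two terms and enclose each.

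\textbf{Nominal term.} The hypothesized nominal-map enclosure gives $f_0(z_t)\in\langle f_0(c_{Z,t}),[J_tG_{Z,t}\ \operatorname{diag}(q_t)]\rangle$.

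\textbf{Residual term.} We show $r(z_t)\in\cE(Z_t)=\langle\mu_t,\operatorname{diag}(h_t)\rangle$. Fix $i$ and $k$. Both $z_t$ and $z_i$ lie in $\cZ_{\mathrm{cert}}$, so Assumption~\ref{ass:regularity} yields
\[
|r_k(z_t)-r_k(z_i)|\leq\sum_\ell L_{k\ell}|z_{t,\ell}-z_{i,\ell}|\leq D_{ik}(Z_t),
\]
where the second inequality holds because $D_{ik}$ is a supremum over $Z_t\ni z_t$. Since $r\in\cC(\cD_N,\sigma)$, we also have $|r_k(z_i)-\widetilde r_{i,k}|\leq\sigma_k$. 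The triangle inequality then gives
\[
\widetilde r_{i,k}-\sigma_k-D_{ik}(Z_t)\leq r_k(z_t)\leq\widetilde r_{i,k}+\sigma_k+D_{ik}(Z_t).
\]
This holds for every $i$, so we may take the max over $i$ on the left and the min over $i$ on the right. That places $r_k(z_t)$ in $[\underline e_k(Z_t),\overline e_k(Z_t)]$ as defined in Eq.~\eqref{eq:set-envelope}. In particular the interval is nonempty, so $h_t\geq0$ and the box equals the zonotope $\langle\mu_t,\operatorname{diag}(h_t)\rangle$.

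\textbf{Combining.} The Minkowski sum of two zonotopes is the zonotope obtained by adding centers and concatenating generators. Hence
\[
x_{t+1}^r\in\langle f_0(c_{Z,t}),[J_tG_{Z,t}\ \operatorname{diag}(q_t)]\rangle\oplus\langle\mu_t,\operatorname{diag}(h_t)\rangle=\widehat\cX_{t+1},
\]
which is exactly Eq.~\eqref{eq:propagation}. The order-reduction inclusion Eq.~\eqref{eq:order-reduction} then gives $x_{t+1}^r\in\cX_{t+1}$, which closes the induction.

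\textbf{Main obstacle.} The delicate point is keeping the Lipschitz inequality legitimate at the query. Assumption~\ref{ass:regularity} holds only on $\cZ_{\mathrm{cert}}$, so we must know that the true query lies there. This is not assumed directly; it follows from the induction hypothesis combined with the domain check in Algorithm~\ref{alg:forereach}. That is why the containment claim is restricted to runs in which propagation reaches $T$.

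Since $x_0\in\cX_0$ and $r$ were arbitrary, the tube is valid for every such residual, which is the claimed uniform soundness.
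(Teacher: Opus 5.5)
Your proposal is correct and follows the paper's own argument: an induction over $t$ in which the nominal-map enclosure contains $f_0(z_t)$, the set-membership envelope of Eq.~\eqref{eq:set-envelope} contains $r(z_t)$ by the triangle inequality, their Minkowski sum is exactly Eq.~\eqref{eq:propagation}, and order reduction preserves containment. Your added step, using the domain check $\mathrm{Box}(Z_t)\subseteq\cZ_{\mathrm{cert}}$ to justify applying Assumption~\ref{ass:regularity} at the true query, is a detail the paper's sketch leaves implicit.
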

The proof is an induction over $t$. The set-membership envelope of
Eq.~\eqref{eq:set-envelope} contains every admissible residual value over $Z_t$,
and the nominal-map enclosure contains $f_0(Z_t)$, so their
zonotope sum in Eq.~\eqref{eq:propagation} contains the next state, order reduction
preserves this containment, and the whole tube is valid once propagation reaches
$T$. Certification then requires $R_t<m_t$ at every safety-relevant time, where
nearby compatible observations shrink the envelope and more obstacle clearance
permits a wider sound tube.

\section{Experiments}
\label{sec:experiments}

Experiments use an affine point-mass and a six-state dynamic-bicycle system,
certifying candidate plans against an unsafe set. Each system is evaluated on two
plan families that separate how a method behaves where the data does and does not
constrain the residual, unsupported plans entering a region with no nearby
observations and corridor-supported plans following a covered corridor. Candidate
plans are hand-designed sequences or fixed action chunks produced by a learned
policy.

We compare against representative constructions of residual uncertainty beyond the
data. Global calibration fits one residual bound to all observations and
applies it uniformly, regional calibration fits one bound per region, and the
comparison also includes a plug-in Gaussian scale, Gaussian-process predictive
bands, the nonlinear Lipschitz reachability algorithm of \citet{alanwar2023noisydata}, and the
conformalized system-level-synthesis MPC budget of \citet{srinivasan2026cpslsmpc}.
The baselines differ in what they assume beyond the observations, the axis the
trilemma isolates.

A complete tube is one propagated through the full horizon $T$. Complete-tube
coverage is the fraction of complete horizon tubes that contain the full target
trajectory. Certified recall is the fraction of target-safe plans that receive a
certificate. False-safe rate is the fraction of certified plans whose target
trajectory is unsafe. Dynamic-bicycle results are ten-seed means, with detailed
setups, certified Lipschitz bounds, certification domains, and secondary metrics in
the supplement.

\subsection{Point-Mass System}

Figure~\ref{fig:exp1-geometry}(a, b) shows the point-mass reachable-tube geometry
for the two plan families.

\begin{figure}[t]
  \centering
  \includegraphics[width=0.9\columnwidth]{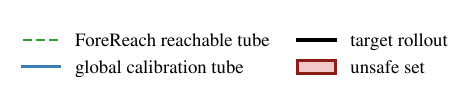}
  \\
  \begin{minipage}[t]{0.49\columnwidth}\centering\includegraphics[width=\linewidth]{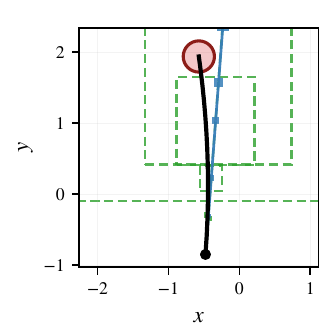}\\[-3pt]{\footnotesize (a)}\end{minipage}\hfill
  \begin{minipage}[t]{0.49\columnwidth}\centering\includegraphics[width=\linewidth]{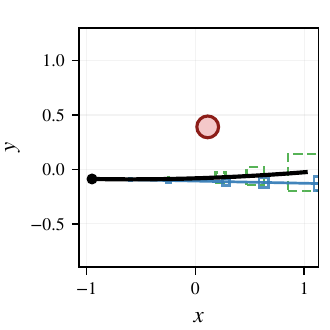}\\[-3pt]{\footnotesize (b)}\end{minipage}
  \\[1pt]
  \begin{minipage}[t]{0.49\columnwidth}\centering\includegraphics[width=\linewidth]{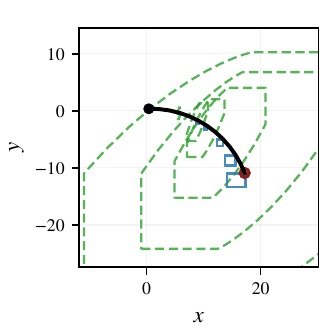}\\[-3pt]{\footnotesize (c)}\end{minipage}\hfill
  \begin{minipage}[t]{0.49\columnwidth}\centering\includegraphics[width=\linewidth]{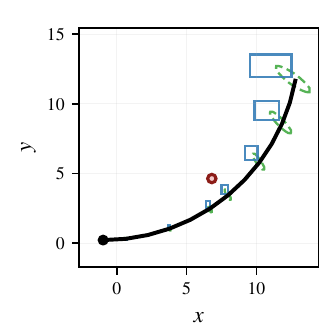}\\[-3pt]{\footnotesize (d)}\end{minipage}
  \caption{Certification geometry in position space. Panels (a) and (b) show the
  point-mass system, and panels (c) and (d) show the dynamic-bicycle system, with
  positions in meters. In the unsupported panels (a) and (c), the global-calibration
  tube remains narrow while the target rollout reaches the
  unsafe set, and our method returns no certificate. In the corridor panels (b)
  and (d), relevant residual observations tighten the green reachable
  tube of our method and enable certification. The green slices are linear position projections
  of the propagated zonotopes of our method.}
  \label{fig:exp1-geometry}
\end{figure}

In the point-mass system, global calibration certifies all unsupported candidates
yet contains none of their target trajectories, and half of its certificates are
false-safe. Every completed tube of our method contains its unsupported trajectory,
so its complete-tube coverage on the unsupported plans is 1.000 with no certificate
issued, and on the supported corridor its complete-tube coverage and certified
recall are 1.000 and 0.998.

\subsection{Dynamic-Bicycle System}

The dynamic-bicycle experiment repeats this comparison in a six-state nonlinear
system, with panels (c, d) of Figure~\ref{fig:exp1-geometry} showing the tubes.

Across the tested noise channels, our method's minimum complete-tube coverage on
the unsupported plans is 1.000, conditional on completing the horizon. At $N=400$,
94.7\% of the unsupported plans exit the certification domain before completing a
tube, and every complete tube for the remaining plans contains its target
trajectory. Its noiseless corridor-supported certified recall rises from
0.490 at $N=25$ to 0.983 at $N=400$. Global calibration covers
0.422 and
0.045 of the unsupported trajectories at $N=25$ and
$N=400$, respectively. The added observations remain in the sampled corridor,
so the calibrated tube narrows without gaining support near the unsupported plans.
Figure~\ref{fig:cross-domain} places the point-mass and
dynamic-bicycle results on the same coverage--recall axes, where the calibration,
plug-in Gaussian, Gaussian-process, and Lipschitz-reachability baselines each fall on the
unsound or the uninformative side while only our method reaches the
sound-and-informative corner. Table~\ref{tab:main-results} reports the
corresponding outcomes of our method.

In our fixed-plan evaluation the remaining baseline, the conformalized
system-level-synthesis MPC (CP-SLS-MPC) budget, yields no usable certificate at any
calibration size, because its conformal radius is infinite for small calibration
sets while the finite large-$N$ bounds remain vacuous
\citep{srinivasan2026cpslsmpc}.

\begin{figure}[t]
  \centering
  \includegraphics[width=0.85\columnwidth]{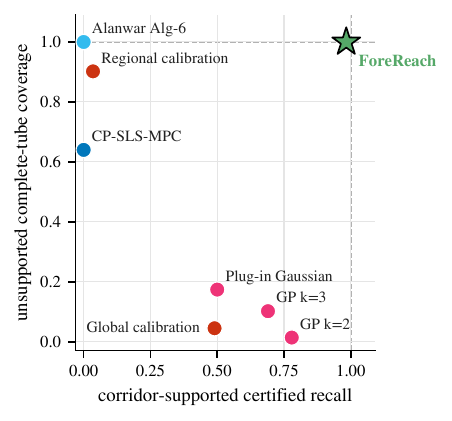}
  \caption{Each point pairs complete-tube coverage on unsupported plans with
  certified recall on corridor-supported plans, where the axes use different
  candidate families and coverage is conditioned on completing the horizon.
  GP $k$ denotes a Gaussian-process band at $k$ predictive standard deviations,
  and Alanwar Alg-6 is the nonlinear Lipschitz method of
  \citet{alanwar2023noisydata}. Our method's abstention and certification outcomes
  appear in Table~\ref{tab:main-results}. The star marks our method.}
  \label{fig:cross-domain}
\end{figure}

\begin{table}[t]
  \centering
  \small
  \caption{ForeReach outcomes. Cov. is complete-tube coverage on the unsupported
  plans, conditioned on completing the horizon. The ordered pair
  $(\abstainD,\abstainS)$ reports domain-exit and safety-abstention rates over
  all unsupported plans. Recall is certified recall on the corridor-supported
  plans, the fraction of target-safe plans that receive a certificate. All
  results are noiseless, and dynamic-bicycle values are ten-seed means.}
  \label{tab:main-results}
  \begin{tabular}{lccc}
\toprule
Domain & Cov. & $(\abstainD,\abstainS)$ & Recall \\
\midrule
Point mass, $N=100$
& $1.000$
& $(0.795,0.205)$
& $0.998$ \\
Dynamic bicycle, $N=400$
& $1.000$
& $(0.947,0.053)$
& $0.983$ \\
\bottomrule
\end{tabular}

\end{table}

\subsection{Certification of Learned Action Chunks}

We now apply the certifier to plans from a learned policy instead of by hand, with
the nominal model, target dynamics, declaration, and certifier unchanged, to test
whether abstention still tracks a plan's distance to the data. This distance is the
maximum over a plan's nominal rollout states of the coordinate-scaled distance to the
nearest observation. Across 45,000 noiseless
evaluations it has Spearman correlation 0.697 with the abstention decision, and the
abstention rate rises from 0.051 in the nearest distance quartile to 0.959 in the
farthest. At $N=400$, placing the same number of observations along the
regions visited by the policy-generated plans raises certified recall from 0.695 to
0.971.

A numerical instantiation of the two data-indistinguishable systems of
Theorem~\ref{thm:propagated-width} reproduces the predicted terminal separation and
yields a positive half-width lower bound on all 64 admissible pairs, with a maximum
of 0.039228, where the supplement gives the full construction and constants.

\subsection{Incorrect Lipschitz Declarations}

To probe a wrong declaration, we supply a Lipschitz bound that underestimates the
residual's regularity. The pairwise consistency test only rejects contradictions
the observations expose, so passing it does not validate the declaration elsewhere.
Here the empirical Lipschitz constant over the tested region is 31.710 times the
declaration, yet the observed samples stay consistent with the smaller bound, so the
test passes.

In the noiseless channel the underestimate makes the propagated tube too narrow to
contain the target trajectory, which reaches the obstacle, so our method issues a
false certificate. Under bounded-uniform noise the same declaration passes again,
but the noise widens the tube until the safety check fires and our method returns
$\abstainS$ at step 7, where the noise forces the abstention with the test still silent. The
test rejects the declaration only once it is halved to a still larger underestimate,
and then on only 9 of the 140 conditions, so a sound certificate depends on a
correct declaration that the one-sided test does not establish.

\section{Related Work}
\label{sec:related}

\paragraph{Trajectory and distributional information.}
Conformalized reachability and conformal planning calibrate trajectory-level
objects and extend that calibration to covariate shift, relaxed
exchangeability, local structure, and sequential distribution change
\citep{huang2026diffusion,huang2026cddr,sun2023plancp,lindemann2023safeplanning,
tibshirani2019covariateshift,barber2023beyondexchangeability,
guan2023localized,bhattacharyya2026groupweighted,
fannjiang2022feedbackcovariateshift,gibbs2021adaptiveconformal}.
Control-oriented variants carry the same calibration into neural MPC, learned
Koopman reachable sets, and system-level synthesis
\citep{wang2025conformaltubempc,nath2026koopmanconformal,
zhang2026transformeroutput,srinivasan2026cpslsmpc}.
Each draws its guarantee from repeated trajectory-level outcomes, whether
calibration data, a deployment assumption, or delayed online labels, none of which
the isolated one-step transitions here supply.

\paragraph{Structural information.}
Set-membership identification and data-driven reachability combine bounded
noise with regularity classes, while reachset-conformant identification learns
uncertainty sets whose reachable outputs contain measured behavior
\citep{alanwar2021matrixzonotopes,alanwar2023noisydata,
zhang2026interpolated,luetzow2026reachsetconformant}.
DaTaReach uses trajectory data and declared regularity for reachability
overapproximation \citep{djeumou2023onthefly}, whereas we certify fixed sequences
from isolated one-step transitions, and deterministic kernel bounds with
robust-control model validation similarly use side information to build
finite-sample envelopes or test model consistency
\citep{maddalena2021deterministic,smith1992modelvalidation}.

Nonlinear set-membership envelopes, pairwise invalidation, bounded-Jacobian tools,
and related MPC bounds construct residual uncertainty from a declared regularity
class
\citep{milanese2004setmembership,jin2020modelinvalidation,
jin2022boundedjacobians,canale2014smpc,manzano2020robust,
manzano2021hoelder}, which our method uses inside a pre-execution certifier that
clears a fixed sequence before deployment. Other safe-learning work estimates
model-error regularity statistically or combines reachability or Gaussian-process
bounds with feedback control, whereas our lower bound concerns the projected width
forced on a fixed sequence
\citep{knuth2021learneddynamics,knuth2022correction,
chou2022contractionmetrics,knuth2023statisticalsafety,
fisac2019safety,koller2018safempc}.

\paragraph{Information limits.}
Cai and Low study adaptation limits for honest confidence intervals, and
Barber et al. establish limits for distribution-free conditional prediction
\citep{cai2004adaptation,barber2021limits}.
Dietrich et al. show that deterministic data-driven reachability can require
exponentially many samples in the state dimension
\citep{dietrich2025holdoutreachability}, and information-based complexity, minimax
estimation, and set-membership theory provide related finite-information
lower-bound frameworks
\citep{traub1988information,tsybakov2009nonparametric,milanese1985optimal}.

\section{Conclusion}
\label{sec:scope}

Certifying a fixed plan from scarce one-step data is limited less by data quantity
than by what the data leave unconstrained. Once the plan leaves the observed region,
no deterministic certifier can at once be uniformly sound, keep the projected tube
finite, and leave the model error unrestricted, and bounded smoothness turns this
tension into a finite plan-dependent width that any sound tube must carry, so a
usable certificate needs structure supplied from outside the data, which the data
can refute but never confirm. The experiments make the same point, where samples accumulating away from the plan
lose containment while observations placed where the plan travels recover
certification, including for learned action chunks. A useful certificate is decided
by where the target data constrain the plan and how uncertainty propagates, with the
sample count secondary, and the guarantee is conditional on a valid supplied bound.
\label{tech:end}

\bibliography{references}

\clearpage
\appendix
\setcounter{figure}{0}
\setcounter{table}{0}
\setcounter{equation}{0}
\setcounter{theorem}{0}
\setcounter{proposition}{0}
\setcounter{definition}{0}
\setcounter{corollary}{0}
\setcounter{remark}{0}
\renewcommand{\thefigure}{S\arabic{figure}}
\renewcommand{\thetable}{S\arabic{table}}
\renewcommand{\theequation}{S\arabic{equation}}
\renewcommand{\thetheorem}{S\arabic{theorem}}
\renewcommand{\theproposition}{S\arabic{proposition}}
\renewcommand{\thecorollary}{S\arabic{corollary}}
\renewcommand{\thedefinition}{S\arabic{definition}}
\renewcommand{\theremark}{S\arabic{remark}}

\section{Information Model}

\begin{definition}[Nominal-model residual system and scarce data]
Let $\mathsf{X}=\R^n$, $\mathsf{U}\subseteq\R^m$, and
$\cZ=\mathsf{X}\times\mathsf{U}$. A nominal dynamics model
$f_0:\cZ\to\mathsf{X}$ is available before target-system certification.
The unknown target dynamics have the decomposition
\begin{equation}
  x^+ = f_\star(x,u)
  = f_0(x,u)+r_\star(x,u),
  \qquad z=(x,u),
  \label{eq:nominal-residual}
\end{equation}
where $r_\star:\cZ\to\R^n$ is the residual function.

The available target-system data are $N$ isolated one-step transitions,
with residual observations computed relative to $f_0$,
\begin{equation}
  \cD_N
  = \left\{(z_i,x_i^+)\right\}_{i=1}^{N},
  \qquad
  \widetilde r_i
  = x_i^+-f_0(z_i)
  = r_\star(z_i)+\eta_i .
  \label{eq:residual-data}
\end{equation}
For every residual coordinate $k\in\{1,\ldots,n\}$, the observation noise
satisfies the deterministic componentwise bound
\begin{equation}
  |\eta_{i,k}|\leq \sigma_k .
  \label{eq:noise-bound}
\end{equation}
The sample locations $z_i$ are finite and pairwise distinct for the smooth
two-system construction.
\end{definition}

\begin{definition}[Candidate control sequence, certification task, and induced queries]
The candidate plan is the open-loop control sequence
\begin{equation}
  \pi=(u_0,\ldots,u_{T-1}),
  \label{eq:plan}
\end{equation}
and the certification task is
$\mathcal Q=(\cX_0,P,\mathcal O,T)$, where
$P\in\R^{p\times n}$ is a nonzero safety-relevant state projection and
$\mathcal O\subseteq\R^p$. The two adversarial systems start from the same
$x_0\in\cX_0$.

For any residual $r$, let
\begin{equation}
  x_{t+1}^{r}=f_0(x_t^{r},u_t)+r(x_t^{r},u_t),
  \qquad
  z_t^{r}=(x_t^{r},u_t),
  \label{eq:plan-rollout}
\end{equation}
and define the induced query sequence
$Q_\pi(r)=\{z_t^{r}\}_{t=0}^{T-1}$.
\end{definition}

\paragraph{Information-model extensions.}
The certifier input tuple is $(f_0,\cD_N,\sigma,\pi)$, with the certification
task fixed separately. Richer information models add complete calibration
trajectories, deployment-distribution or likelihood-ratio information, online
deployment labels, or a feedback policy that changes the fixed plan.

\begin{definition}[Data-consistent residual class]
The unrestricted consistency class is
\begin{equation}
  \cC(\cD_N,\sigma)
  =
  \left\{
    r:\cZ\to\R^n:
    \begin{array}{l}
    \text{there exist }\eta_1,\ldots,\eta_N\text{ with}\\
    \widetilde r_i=r(z_i)+\eta_i\text{ and}\\
    |\eta_{i,k}|\leq\sigma_k\text{ for every }i,k
    \end{array}
  \right\}.
  \label{eq:consistency-class}
\end{equation}
No relation is imposed between values of $r$ away from the observed
state-input locations.
\end{definition}

\section{Admissible Certifiers and Three Properties}

\begin{definition}[Admissible pre-execution certifier]
For a fixed certification task, an admissible deterministic certifier receives
$(f_0,\cD_N,\sigma,\pi)$ and returns either abstention or a tube,
\begin{equation}
  \cA(f_0,\cD_N,\sigma,\pi)
  \in
  \left\{\abstain\right\}
  \cup
  \left(2^{\mathsf{X}}\right)^{T+1}.
  \label{eq:admissible-program}
\end{equation}
When the output is a tube, write it as
$\mathbf{X}=(\cX_0,\ldots,\cX_T)$. Obstacle avoidance can be checked after
propagation by testing each safety projection against the unsafe set.
\end{definition}

\begin{definition}[Worst-case soundness]
Fix $(f_0,\cD_N,\sigma,\pi)$ and a certification task. A non-abstaining output
$\mathbf{X}$ is uniformly sound over the unrestricted consistency class if
\begin{equation}
  \forall r\in\cC(\cD_N,\sigma),\quad
  \forall t\in\{0,\ldots,T\},\quad
  x_t^{r}\in\cX_t.
  \label{eq:worst-case-soundness}
\end{equation}
The universal quantifier ranges over every true system consistent with the
fixed observed data and deterministic noise bound.
\end{definition}

\begin{definition}[Task-relevant informativeness]
For a set $S\subseteq\mathsf{X}$, define its projected diameter and half-width by
\begin{equation}
  \begin{gathered}
  \diam_P(S):=\sup_{x,y\in S}\|P(x-y)\|_2,\\
  \rho_P(S):=\tfrac{1}{2}\diam_P(S).
  \end{gathered}
  \label{eq:projected-width}
\end{equation}
A certifier is $\bar\rho$-informative for the task if it returns a tube and
\begin{equation}
  \max_{0\leq t\leq T}\rho_P(\cX_t)\leq\bar\rho<\infty.
  \label{eq:informativeness}
\end{equation}
For a centered projected slice, write $P\cX_t=c_t\oplus E_t$ with
$E_t=-E_t$, and define
\begin{equation}
  R_t:=\sup_{e\in E_t}\|e\|_2,
  \qquad
  m_t:=\operatorname{dist}(c_t,\mathcal O).
  \label{eq:safety-radius-margin-supp}
\end{equation}
The condition $R_t<m_t$ is sufficient for safety. For the centrally symmetric
projected slices used here, including zonotopes, $R_t=\rho_P(\cX_t)$ because
$\diam_P(\cX_t)=2\sup_{e\in E_t}\|e\|_2$.
For a fixed plan, define the plan-level worst-case certification indicator
\begin{equation}
  \operatorname{Cert}^{\mathrm{wc}}_{\pi}(\cA)
  :=\mathbf 1\!\left\{\cA\text{ returns a safety certificate on }\pi\right\}.
  \label{eq:planwise-certification}
\end{equation}
This fixed-plan indicator is distinct from experimental certified recall,
which averages certification decisions over candidate plans labeled safe by
target-system Monte Carlo rollouts.
\end{definition}

\begin{definition}[Unrestricted beyond the observations]
An admissible certifier is unrestricted beyond the observations in Theorem~\ref{thm:trilemma} when its
soundness claim uses the full consistency class
$\cC(\cD_N,\sigma)$. Lipschitz, bounded-variation, RKHS,
parametric, shift, likelihood-ratio, trajectory-law, and global-amplitude
restrictions define strict subclasses outside this property.
\end{definition}

\begin{remark}[Why pairwise tests contain $2\sigma_k$ and envelopes contain $\sigma_k$]
Suppose the true residual obeys the componentwise declaration
\begin{equation}
  |r_k(z)-r_k(z')|
  \leq \sum_{\ell=1}^{n+m}L_{k\ell}|z_\ell-z_\ell'|.
  \label{eq:declared-regularity}
\end{equation}
Comparing two noisy observations gives
\begin{align}
  |\widetilde r_{i,k}-\widetilde r_{j,k}|
  &\leq |r_k(z_i)-r_k(z_j)|
    +|\eta_{i,k}|+|\eta_{j,k}| \notag\\
  &\leq
  \sum_{\ell}L_{k\ell}|z_{i,\ell}-z_{j,\ell}|
  +2\sigma_k .
  \label{eq:two-noise-slacks}
\end{align}
Transferring one observation to one query gives
\begin{align}
  |r_k(z)-\widetilde r_{i,k}|
  &\leq |r_k(z)-r_k(z_i)|+|\eta_{i,k}| \notag\\
  &\leq
  \sum_{\ell}L_{k\ell}|z_\ell-z_{i,\ell}|+\sigma_k .
  \label{eq:one-noise-slack}
\end{align}
The two adversarial systems used in Theorem~\ref{thm:trilemma} share the same
admissible noise realization $\eta_i=\widetilde r_i-r_0(z_i)$, where $r_0$ is the
baseline residual defined below, which satisfies
$|\eta_{i,k}|\le\sigma_k$ because $r_0\in\cC(\cD_N,\sigma)$. Their observed
datasets are therefore identical without an additional $\sigma_k$ term.
\end{remark}

\section{Data-Free Neighborhood Along the Baseline Rollout}

\begin{definition}[Data-free rollout neighborhood and bump]
Let $r_0$ be the restriction to $\cZ$ of a $C^\infty$ map defined on an open
neighborhood of $\cZ$ and assume $r_0\in\cC(\cD_N,\sigma)$. Set
$F_b=f_0+r_0$, and let $x_t^0$ and $z_t^0=(x_t^0,u_t)$ denote the trajectory
and queries of $F_b$ under $\pi$.

The baseline rollout has a data-free neighborhood at
$\tau\in\{0,\ldots,T-1\}$ if there is a $d_{\mathrm{gap}}>0$ such that
\begin{equation}
  \min_{1\leq i\leq N}\|z_\tau^0-z_i\|_2>d_{\mathrm{gap}},
  \label{eq:sample-gap}
\end{equation}
and $z_\tau^0$ is isolated from the other finite baseline queries,
\begin{equation}
  d_{\mathrm{query}}
  :=
  \min_{\substack{0\leq s<T\\s\neq\tau}}
  \|z_\tau^0-z_s^0\|_2
  >0,
  \label{eq:query-isolation}
\end{equation}
with $d_{\mathrm{query}}=\infty$ when the minimum is over an empty set.
Choose
\begin{equation}
  0<\varepsilon<
  \min\{d_{\mathrm{gap}},d_{\mathrm{query}}/2\}
  \label{eq:bump-radius}
\end{equation}
and a smooth compactly supported bump
$\phi\in C_c^\infty(\R^{n+m})$ satisfying
\begin{equation}
  0\leq\phi\leq 1,\qquad
  \phi(z_\tau^0)=1,\qquad
  \supp(\phi)\subset B_\varepsilon(z_\tau^0).
  \label{eq:bump-properties}
\end{equation}
For a state coordinate vector $e_k$ and amplitude $a>0$, define
\begin{equation}
  r^{(+)}(z)=r_0(z)+a\phi(z)e_k,
  \qquad
  r^{(-)}(z)=r_0(z)-a\phi(z)e_k.
  \label{eq:adversarial-residuals}
\end{equation}
\end{definition}

\paragraph{Operator norms of derivative maps.}
For a $j$-linear map $A:(\R^{n+m})^j\to\R^n$ we write
\[
  \|A\|_{2\to2}
  :=\sup\{\|A[v_1,\ldots,v_j]\|_2:\ \|v_i\|_2\leq1\ \forall i\},
\]
so that $\|D_z r(z)\|_{2\to2}$ is the induced operator norm of the Jacobian and
$\|D_z^2 r(z)\|_{2\to2}=\sup_{\|v\|_2\leq1,\,\|w\|_2\leq1}\|D_z^2 r(z)[v,w]\|_2$ is
the norm of the Hessian bilinear map. Every first- and second-derivative bound
below uses this norm.

\begin{definition}[Bounded smooth consistency class]
For a declared amplitude bound $R_{\max}>0$ and derivative bounds
$\mathbf{H}=(H_1,H_2)$, define
\begin{equation}
  \begin{aligned}
  &\cC_{\mathrm{sm}}(\cD_N,\sigma;R_{\max},\mathbf H)
  :=\Big\{\,r\in\cC(\cD_N,\sigma):\\
  &\quad
  \begin{array}{l}
  r\text{ extends to a }C^2\text{ map near }\cZ,\\
  \sup_{z\in\cZ}\|r(z)\|_\infty\leq R_{\max},\\
  \sup_{z\in\cZ}\|D_z r(z)\|_{2\to2}\leq H_1,\\
  \sup_{z\in\cZ}\|D_z^2 r(z)\|_{2\to2}\leq H_2
  \end{array}\Big\}.
  \end{aligned}
  \label{eq:bounded-smooth-class}
\end{equation}
For the two-system construction, class membership is an explicit premise,
\begin{equation}
  0<a\leq R_{\max},
  \qquad
  r_0+a\phi e_k\in\cC_{\mathrm{sm}},
  \qquad
  r_0-a\phi e_k\in\cC_{\mathrm{sm}}.
  \label{eq:bump-class-membership}
\end{equation}
Thus $a$ must fit both the amplitude slack around $r_0$ and the declared
first- and second-derivative bounds of the class.
\end{definition}

\begin{definition}[Single-transition localization condition]
For the multi-step quantitative statement, the bump and amplitude satisfy the
single-transition localization condition if the two perturbed trajectories coincide with the baseline
trajectory through time $\tau$, both evaluate $\phi(z_\tau^0)=1$ on the
transition $\tau\to\tau+1$, and
\begin{equation}
  z_s^{(\pm)}\notin\supp(\phi),
  \qquad
  s=\tau+1,\ldots,T-1.
  \label{eq:one-pass}
\end{equation}
Thus the two systems evolve under the common baseline map $F_b$ after the
perturbed transition.
\end{definition}

\section{Main Statement}

Theorem 1 and Theorem 2 of the main text correspond to Level A and Level B of
the combined statement below.

\begin{theorem}[Finite-data reachable-tube width lower bounds]
\label{thm:trilemma}
Fix a finite dataset $\cD_N$, its deterministic noise bound $\sigma$, an
open-loop control sequence $\pi$, and a certification task $\mathcal Q$.
Assume that a smooth feasible residual $r_0$ has a data-free rollout
neighborhood at some $\tau$. Let $\cA$ be any admissible
pre-execution certifier.

\smallskip
\noindent\textbf{Level A: unrestricted residual class.}
If $\cA$ is worst-case sound over the full consistency class
$\cC(\cD_N,\sigma)$, then $\cA$ is not $\bar\rho$-informative for any finite
$\bar\rho$. More precisely, either $\cA$ abstains or at least one
safety-projected tube slice has infinite diameter.

For every coordinate $k$ with $\|Pe_k\|_2>0$ and every $a>0$, the two residuals
in \eqref{eq:adversarial-residuals} belong to
$\cC(\cD_N,\sigma)$. Their trajectories agree through time $\tau$ and satisfy
the exact next-state separation
\begin{equation}
  \delta_{\tau+1}
  :=
  x_{\tau+1}^{(+)}-x_{\tau+1}^{(-)}
  =2a e_k .
  \label{eq:first-separation}
\end{equation}
Every tube sound for the full consistency class satisfies
\begin{equation}
  \rho_P(\cX_{\tau+1})
  \geq
  \frac{1}{2}\|P\delta_{\tau+1}\|_2
  =
  a\|Pe_k\|_2 .
  \label{eq:unbounded-halfwidth}
\end{equation}
The bound holds for arbitrary $a$, so no finite $\bar\rho$ can satisfy
worst-case soundness. No admissible certifier can therefore be simultaneously
worst-case sound, $\bar\rho$-informative, and unrestricted beyond the observations.

\smallskip
\noindent\textbf{Level B: bounded smooth consistency class.}
For
\begin{equation}
  A_s:=D_xF_b(x_s^0,u_s),
  \qquad
  G_{T,\tau}
  :=
  A_{T-1}A_{T-2}\cdots A_{\tau+1},
  \label{eq:propagation-gain}
\end{equation}
use the convention $G_{\tau+1,\tau}=I$. The product is time ordered from the
perturbed state at time $\tau+1$ to the terminal state at time $T$. Choose
\begin{equation}
  k\in
  \operatorname*{arg\,max}_{1\leq j\leq n}
  \|PG_{T,\tau}e_j\|_2
  \quad\text{and require}\quad
  \|PG_{T,\tau}e_k\|_2>0.
  \label{eq:nondegenerate-direction}
\end{equation}
If $F_b(\cdot,u_s)$ is affine for every
$s=\tau+1,\ldots,T-1$, then
for every amplitude satisfying \eqref{eq:bump-class-membership} and the
single-transition localization condition,
\begin{equation}
  \delta_T
  =
  2aG_{T,\tau}e_k
  \label{eq:affine-terminal-separation}
\end{equation}
and every tube sound over
$\cC_{\mathrm{sm}}(\cD_N,\sigma;R_{\max},\mathbf H)$ obeys the exact lower bound
\begin{equation}
  \rho_P(\cX_T)
  \geq
  a\|PG_{T,\tau}e_k\|_2.
  \label{eq:affine-width-lower-bound}
\end{equation}

For the smooth nonlinear case, let
$\Omega_s=B_{\varrho_s}(x_s^0)$, $\varrho_s>0$, be regions on which the known nominal
derivative bounds
\begin{equation}
  \begin{gathered}
  \ell_{0,s}:=\sup_{x\in\Omega_s}\|D_xf_0(x,u_s)\|_{2\to2},\\
  h_{0,s}:=\sup_{x\in\Omega_s}\|D_x^2f_0(x,u_s)\|_{2\to2}
  \end{gathered}
  \label{eq:nominal-derivative-budgets}
\end{equation}
are finite. Define
\begin{equation}
  \begin{gathered}
  \Lambda_s:=\ell_{0,s}+H_1,\qquad
  \overline H_s:=h_{0,s}+H_2,\\
  M_{\tau+1}:=1,\qquad
  M_s:=\prod_{q=\tau+1}^{s-1}\Lambda_q .
  \end{gathered}
  \label{eq:deviation-gains}
\end{equation}
For $s=\tau+1,\ldots,T-1$, also set
\begin{equation}
  \begin{gathered}
  G_{T,s+1}:=A_{T-1}\cdots A_{s+1},\qquad
  G_{T,T}:=I,\\
  d_s:=\operatorname{dist}(z_s^0,\supp\phi).
  \end{gathered}
  \label{eq:tail-gains-clearance}
\end{equation}

Assume the class slacks
\begin{align}
  \mu_0&:=R_{\max}-\sup_{z\in\cZ}\|r_0(z)\|_\infty>0,\notag\\
  \mu_1&:=H_1-\sup_{z\in\cZ}\|D_zr_0(z)\|_{2\to2}>0,\notag\\
  \mu_2&:=H_2-\sup_{z\in\cZ}\|D_z^2r_0(z)\|_{2\to2}>0.
  \label{eq:class-slacks}
\end{align}
Let $b_0:=\sup_z|\phi(z)|$ and
$b_j:=\sup_z\|D_z^j\phi(z)\|_{2\to2}$ for $j=1,2$, with the convention
$\mu_j/b_j=\infty$ when $b_j=0$. Define
\begin{align}
  a_{\mathrm{class}}
  &:=
  \min_{j=0,1,2}\frac{\mu_j}{b_j},
  \label{eq:aclass}\\
  a_{\mathrm{pass}}
  &:=
  \frac12
  \min_{s=\tau+1,\ldots,T-1}
  \frac{\min\{d_s,\varrho_s\}}{M_s},
  \label{eq:apass}\\
  C_{T,\tau,P}
  &:=
  \frac12
  \sum_{s=\tau+1}^{T-1}
  \|PG_{T,s+1}\|_{2\to2}\,
  \overline H_s M_s^2,
  \label{eq:explicit-C}\\
  a_{\mathrm{lin}}
  &:=
  \begin{cases}
  \|PG_{T,\tau}e_k\|_2/(2C_{T,\tau,P}),
    & C_{T,\tau,P}>0,\\
  \infty, & C_{T,\tau,P}=0,
  \end{cases}
  \label{eq:alin}\\
  a_\star
  &:=
  \min\{a_{\mathrm{class}},a_{\mathrm{pass}},a_{\mathrm{lin}}\}.
  \label{eq:astar}
\end{align}
An empty minimum in \eqref{eq:apass} is $\infty$, and a ratio with $M_s=0$
is also $\infty$. If $F_b$ is $C^2$ on these regions, then every
$0<a\leq a_\star$ satisfies the class-membership,
single-transition localization, and $a\leq a_{\mathrm{lin}}$ conditions, and every tube sound over
$\cC_{\mathrm{sm}}(\cD_N,\sigma;R_{\max},\mathbf H)$ satisfies
\begin{equation}
  \|P\delta_T\|_2
  \geq
  2a\|PG_{T,\tau}e_k\|_2
  -2C_{T,\tau,P}a^2,
  \qquad 0<a\leq a_\star,
  \label{eq:nonlinear-separation}
\end{equation}
and hence
\begin{equation}
  \rho_P(\cX_T)
  \geq
  a\|PG_{T,\tau}e_k\|_2
  -C_{T,\tau,P}a^2.
  \label{eq:nonlinear-width-lower-bound}
\end{equation}

For either quantitative variant, let
$D_T:=\|P\delta_T\|_2$. If a reported terminal half-width is capped by
$\bar\rho$, soundness for both data-consistent systems is impossible whenever
\begin{equation}
  D_T>2\bar\rho .
  \label{eq:correct-width-contradiction}
\end{equation}
\end{theorem}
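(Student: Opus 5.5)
Both levels rest on one principle: the two residuals $r^{(\pm)}$ produce identical inputs to $\cA$, so any deterministic certifier returns the same output $\mathbf X$ for both. If it does not abstain, soundness forces $\cX_t$ to contain both $x_t^{(+)}$ and $x_t^{(-)}$. From this, $2\rho_P(\cX_t)=\diam_P(\cX_t)\ge\|P(x_t^{(+)}-x_t^{(-)})\|_2$.

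For Level A, I will first verify membership in $\cC(\cD_N,\sigma)$. Since $\varepsilon<d_{\mathrm{gap}}$ and $\supp\phi\subset B_\varepsilon(z_\tau^0)$, we have $\phi(z_i)=0$ for every $i$, so $r^{(\pm)}(z_i)=r_0(z_i)$. The shared noise $\eta_i=\widetilde r_i-r_0(z_i)$ is then admissible. Next comes agreement through time $\tau$. By induction on $s<\tau$, both rollouts equal the baseline: at $z_s^0$ with $s\neq\tau$ we have $\|z_s^0-z_\tau^0\|\ge d_{\mathrm{query}}>2\varepsilon$, so $\phi(z_s^0)=0$. At step $\tau$ the common query $z_\tau^0$ has $\phi=1$, giving $x_{\tau+1}^{(\pm)}=F_b(z_\tau^0)\pm ae_k$ and hence \eqref{eq:first-separation}. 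The width bound \eqref{eq:unbounded-halfwidth} follows from the principle above. Because $a$ is arbitrary and the output $\mathbf X$ does not depend on $a$ (the data are the same for all $a$), a single non-abstaining output must satisfy $\rho_P(\cX_{\tau+1})\ge a\|Pe_k\|_2$ for every $a$. Its diameter is therefore infinite, which gives the non-informativeness conclusion. Such a $k$ exists because $P\neq0$.

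For Level B, membership in $\cC_{\mathrm{sm}}$ for $a\le a_{\mathrm{class}}$ follows from the triangle inequality on each seminorm: $\|D^j(r_0\pm a\phi e_k)\|\le \sup\|D^jr_0\|+ab_j\le H_j$, and likewise for the amplitude. In the affine case, single-transition localization makes both systems evolve under $F_b$ after $\tau+1$. Affine maps propagate differences exactly, $\delta_{s+1}=A_s\delta_s$, so $\delta_T=G_{T,\tau}\delta_{\tau+1}=2aG_{T,\tau}e_k$, and the bound follows from the principle.

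For the nonlinear case, I will track the deviations $\xi_s^{(\pm)}=x_s^{(\pm)}-x_s^0$ for $s\ge\tau+1$, with $\xi_{\tau+1}^{(\pm)}=\pm ae_k$.

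\emph{Step 1: a Lipschitz a priori bound.} Using $\Lambda_s$ as a Lipschitz constant of $F_b(\cdot,u_s)$ on $\Omega_s$, induction gives $\|\xi_s^{(\pm)}\|\le aM_s$. This needs the iterates to stay in $\Omega_s$ and away from $\supp\phi$, and $a\le a_{\mathrm{pass}}$ gives $aM_s\le\tfrac12\min\{d_s,\varrho_s\}$. Since the query moves only in $x$, $\|z_s^{(\pm)}-z_s^0\|=\|\xi_s\|<d_s$ shows $z_s^{(\pm)}\notin\supp\phi$, which establishes localization \eqref{eq:one-pass}. One must check that $\Omega_s$ is convex, so the mean-value argument applies.

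\emph{Step 2: Taylor expansion.} I write $\xi_{s+1}=A_s\xi_s+\rho_s$ with $\|\rho_s\|\le\tfrac12\overline H_s\|\xi_s\|^2\le\tfrac12\overline H_sa^2M_s^2$. Unrolling gives $\xi_T^{(\pm)}=\pm aG_{T,\tau}e_k+\sum_s G_{T,s+1}\rho_s^{(\pm)}$. Subtracting the two signs and projecting yields $\|P\delta_T\|\ge 2a\|PG_{T,\tau}e_k\|-2C_{T,\tau,P}a^2$, which is \eqref{eq:nonlinear-separation}. The factor $2$ comes from the two remainders. Halving gives \eqref{eq:nonlinear-width-lower-bound}, and $a\le a_{\mathrm{lin}}$ ensures the bound is at least $a\|PG_{T,\tau}e_k\|/2>0$, so it is nontrivial.

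The final contradiction \eqref{eq:correct-width-contradiction} is immediate from $D_T\le2\rho_P(\cX_T)\le2\bar\rho$. The main obstacle is Step 1: making the a priori containment and the localization mutually consistent in one induction, including the fact that $\overline H_s$ is only valid on $\Omega_s$. I will handle this by proving containment in $\Omega_s$ and the clearance bound simultaneously at each $s$, before invoking the Taylor bound on the segment between $x_s^0$ and $x_s^{(\pm)}$.
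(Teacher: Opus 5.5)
Your proposal is correct and follows the paper's proof essentially step for step. It shows indistinguishability via the bump vanishing at the samples, runs the induction to the split at time $\tau$, and gets class membership from the triangle inequality. It then uses one simultaneous induction to obtain $\|x_s^{(\pm)}-x_s^0\|_2\leq aM_s$ together with clearance from $\supp\phi$ and containment in $\Omega_s$, and finishes with a second-order Taylor recursion unrolled through $G_{T,s+1}$. The only difference is cosmetic: you unroll each deviation separately and subtract at the end, whereas the paper subtracts first and unrolls $\delta_{s+1}=A_s\delta_s+w_s$, and both give the same bound.
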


\begin{remark}[Gap dependence of $a_\star$]
Level A uses the unrestricted consistency class and places no amplitude cap on
$a$. Any positive data-free gap therefore supports the unbounded-amplitude
alternative. For Level B, let a fixed unit-scale bump $\psi$ generate
$\phi_h(z)=\psi((z-z_\tau^0)/h)$ inside a gap of radius $h$. Then
\begin{equation}
  \|D^j\phi_h\|_\infty=h^{-j}\|D^j\psi\|_\infty,
  \qquad j=0,1,2,
\end{equation}
and the class-membership threshold becomes
\begin{equation}
  a_{\mathrm{class}}(h)=
  \min\left\{
  \frac{\mu_0}{\|\psi\|_\infty},
  \frac{\mu_1h}{\|D\psi\|_\infty},
  \frac{\mu_2h^2}{\|D^2\psi\|_\infty}
  \right\}.
\end{equation}
The localization threshold $a_{\mathrm{pass}}(h)$ and the linear threshold
$a_{\mathrm{lin}}$ combine with $a_{\mathrm{class}}(h)$ as at full horizon. Narrower
gaps reduce the admissible perturbation through the bump-derivative bounds and the
post-split clearance, which is the quantitative counterpart of adding support near
the plan query.
\end{remark}

\begin{remark}[Quantifiers and construction]
Theorem~\ref{thm:trilemma} quantifies over every admissible certifier claiming
worst-case soundness, and its lower bound comes from two data-indistinguishable
systems with a multi-step propagation gain, which distinguishes it from a
single-estimator or single-point lower bound. Confidence-set impossibility results
give statistical context \citep{cai2004adaptation,barber2021limits}, data-driven
abstraction and model invalidation give comparisons at the level of function classes
\citep{jin2020modelinvalidation,jin2022boundedjacobians}, and estimated
Lipschitz-trust-region planning is a separate route whose guarantees are probabilistic
\citep{knuth2021learneddynamics,knuth2022correction}.
\end{remark}

\begin{remark}[Randomized procedures]
If a randomized certifier receives bitwise identical inputs under two systems,
then its output distribution is identical under those systems. Theorem
\ref{thm:trilemma} covers deterministic certifiers.
\end{remark}

\begin{remark}[Short-horizon lower-bound scan]
For the fixed six-state system, we scan every admissible pair of bump time
$\tau\leq T-2$ and coordinate $k$ that has a positive data-free gap and a positive
projected gain. All 64 such pairs give a positive proved terminal half-width lower
bound, the largest 0.039228 at $\tau=10$ and $k=2$, since a shorter remaining
propagation lowers the remainder constant and widens the admissible amplitude.
\end{remark}

\section{Proofs of the Main-Text Width Bounds}

\begin{proof}[Proof of Theorem~\ref{thm:trilemma}]
\medskip\noindent\textbf{Sample-point consistency and indistinguishability.}
By \eqref{eq:sample-gap} and \eqref{eq:bump-radius}, every sample point $z_i$ lies
outside $\supp\phi$. Hence
\begin{equation}
  r^{(+)}(z_i)=r_0(z_i)=r^{(-)}(z_i)
  \qquad (i=1,\ldots,N).
  \label{eq:proof-sample-identity}
\end{equation}
Because $r_0\in\cC(\cD_N,\sigma)$, there is an admissible noise realization
$\eta_i=\widetilde r_i-r_0(z_i)$ with $|\eta_{i,k}|\le\sigma_k$, and the same
realization makes both perturbed residuals consistent with the observed dataset.
Therefore the deterministic certifier $\cA$ receives identical inputs and
returns the same tube or the same abstention decision under both systems.

\medskip\noindent\textbf{The trajectories meet the bump.}
At time zero, $x_0^{(+)}=x_0^0=x_0^{(-)}$. Suppose the three states agree at a
time $t<\tau$. Then $z_t^0\notin\supp\phi$ by
\eqref{eq:query-isolation} to \eqref{eq:bump-properties}, so
$r^{(+)}(z_t^0)=r_0(z_t^0)=r^{(-)}(z_t^0)$.
Equation \eqref{eq:plan-rollout} gives equality at time $t+1$.

Induction yields $x_t^{(+)}=x_t^0=x_t^{(-)}$ for $t\leq\tau$.
At $z_\tau^0$, \eqref{eq:bump-properties} gives $\phi(z_\tau^0)=1$, and hence
\begin{equation}
  x_{\tau+1}^{(\pm)}
  =
  x_{\tau+1}^0\pm ae_k,
  \qquad
  \delta_{\tau+1}=2ae_k.
  \label{eq:proof-first-split}
\end{equation}

\medskip\noindent\textbf{Level A.}
Because $P\neq0$, at least one coordinate satisfies $\|Pe_k\|_2>0$.
Soundness over $\cC(\cD_N,\sigma)$ requires the common slice
$\cX_{\tau+1}$ to contain both states in \eqref{eq:proof-first-split}.
The diameter definition \eqref{eq:projected-width} therefore gives
\begin{equation}
  \rho_P(\cX_{\tau+1})
  \geq
  \frac12\|P\delta_{\tau+1}\|_2
  =
  a\|Pe_k\|_2.
  \label{eq:proof-level-A-width}
\end{equation}
The unrestricted class permits every $a>0$. A finite projected tube cannot
satisfy \eqref{eq:proof-level-A-width} for all $a$, so a sound certifier must
abstain or fail $\bar\rho$-informativeness for every finite $\bar\rho$.

\medskip\noindent\textbf{Level B, affine propagation.}
Under the single-transition localization condition, both residuals equal $r_0$ after the perturbed
transition. Thus both trajectories evolve under the same maps
$F_b(\cdot,u_s)$ for $s\geq\tau+1$.
If these maps are affine, subtraction gives
\begin{equation}
  \delta_{s+1}=A_s\delta_s,
  \qquad s=\tau+1,\ldots,T-1.
  \label{eq:proof-affine-recursion}
\end{equation}
Iterating \eqref{eq:proof-affine-recursion} from
\eqref{eq:proof-first-split} yields
$\delta_T=2aG_{T,\tau}e_k$. A tube sound over
$\cC_{\mathrm{sm}}$ contains both terminal states, so its half-width is at
least $a\|PG_{T,\tau}e_k\|_2$.

\medskip\noindent\textbf{Level B, class membership and single-transition localization threshold.}
Let $\|g\|_{(0)}:=\sup_z\|g(z)\|_\infty$ and, for $j=1,2$, let
$\|g\|_{(j)}:=\sup_z\|D_z^jg(z)\|_{2\to2}$. The triangle inequality and
\eqref{eq:aclass} give
\begin{equation}
  \begin{gathered}
  \|r_0\pm a\phi e_k\|_{(j)}
  \leq\|r_0\|_{(j)}+ab_j\\
  \leq\|r_0\|_{(j)}+\mu_j
  \leq K_j,\quad j=0,1,2,
  \end{gathered}
  \label{eq:proof-class-membership}
\end{equation}
where $K_0=R_{\max}$, $K_1=H_1$, and $K_2=H_2$. The bump is zero at every
sample point, so \eqref{eq:proof-class-membership} proves membership in
$\cC_{\mathrm{sm}}$.

Let $h_s^{(\pm)}:=x_s^{(\pm)}-x_s^0$. At time $\tau+1$,
$\|h_{\tau+1}^{(\pm)}\|_2=a=aM_{\tau+1}$.
We prove the deviation bound and the single-transition localization property simultaneously. Suppose
$\|h_s^{(\pm)}\|_2\leq aM_s$ at some
$s\in\{\tau+1,\ldots,T-1\}$. Since $a\leq a_{\mathrm{pass}}$,
$aM_s\leq\min\{d_s,\varrho_s\}/2$. It follows that
\begin{equation}
  \operatorname{dist}(z_s^{(\pm)},\supp\phi)
  \geq d_s-\|h_s^{(\pm)}\|_2
  \geq d_s/2>0,
  \label{eq:proof-one-pass}
\end{equation}
and the segment from $x_s^0$ to $x_s^{(\pm)}$ lies in $\Omega_s$. The bump
term is therefore zero at $z_s^{(\pm)}$, so the perturbed trajectory uses the
common map $F_b(\cdot,u_s)$. The mean-value inequality now gives
\begin{equation}
  \|h_{s+1}^{(\pm)}\|_2
  \leq \Lambda_s\|h_s^{(\pm)}\|_2
  \leq aM_{s+1}.
  \label{eq:proof-rough-deviation}
\end{equation}
Starting from the base case at $\tau+1$, this simultaneous induction establishes
$\|h_s^{(\pm)}\|_2\leq aM_s$ through time $T$ and
\eqref{eq:proof-one-pass} through time $T-1$. Thus neither perturbed trajectory
re-enters the bump support after time $\tau+1$.

\medskip\noindent\textbf{Level B, nonlinear separation.}
Taylor expansion of the common map around $x_s^0$ gives
\begin{equation}
  h_{s+1}^{(\pm)}
  =
  A_sh_s^{(\pm)}+q_s^{(\pm)},
  \qquad
  \|q_s^{(\pm)}\|_2
  \leq
  \frac12\overline H_s\|h_s^{(\pm)}\|_2^2.
  \label{eq:proof-taylor-two-trajectories}
\end{equation}
Here $\overline H_s=h_{0,s}+H_2$ follows from
$F_b=f_0+r_0$ and the second-derivative bound in
\eqref{eq:bounded-smooth-class}.

Subtracting the two expansions in
\eqref{eq:proof-taylor-two-trajectories} yields
\begin{equation}
  \begin{gathered}
  \delta_{s+1}=A_s\delta_s+w_s,\qquad
  w_s:=q_s^{(+)}-q_s^{(-)},\\
  \|w_s\|_2\leq\overline H_s a^2M_s^2.
  \end{gathered}
  \label{eq:proof-separation-recursion}
\end{equation}
The last inequality applies \eqref{eq:proof-rough-deviation} separately to the
two individual distances from the baseline.

Unrolling \eqref{eq:proof-separation-recursion} gives
\begin{equation}
  \delta_T
  =
  2aG_{T,\tau}e_k
  +
  \sum_{s=\tau+1}^{T-1}G_{T,s+1}w_s.
  \label{eq:proof-unrolled-separation}
\end{equation}
Applying $P$, the reverse triangle inequality, and
\eqref{eq:proof-separation-recursion} gives
\begin{align}
  \|P\delta_T\|_2
  &\geq
  2a\|PG_{T,\tau}e_k\|_2\notag\\
  &\quad-
  a^2\sum_{s=\tau+1}^{T-1}
  \|PG_{T,s+1}\|_{2\to2}\overline H_sM_s^2 \notag\\
  &=
  2a\|PG_{T,\tau}e_k\|_2
  -2C_{T,\tau,P}a^2.
  \label{eq:proof-projected-lower-bound}
\end{align}
This is \eqref{eq:nonlinear-separation}. Since
$a\leq a_{\mathrm{lin}}$, its right side is at least
$a\|PG_{T,\tau}e_k\|_2>0$.

\medskip\noindent\textbf{Width and trilemma conclusion.}
Worst-case soundness over $\cC_{\mathrm{sm}}$ places both terminal states in
the same $\cX_T$. Hence
\begin{equation}
  \rho_P(\cX_T)
  \geq
  \frac12\|P\delta_T\|_2
  \geq
  a\|PG_{T,\tau}e_k\|_2-C_{T,\tau,P}a^2.
  \label{eq:proof-terminal-width}
\end{equation}
If $D_T>2\bar\rho$, \eqref{eq:proof-terminal-width} contradicts a reported
half-width cap $\bar\rho$. Together with Level A, this proves the stated
soundness, informativeness, and unrestricted-extrapolation alternative.
\end{proof}

\begin{corollary}[Consequences for informative tubes and certification]
\label{cor:planwise-consequences-supp}
Let $\cA$ be any admissible deterministic certifier.
\begin{enumerate}
  \item[(i)] Assume the Level-B hypotheses of
  Theorem~\ref{thm:trilemma}. In particular, the constructed residuals belong
  to $\cC_{\mathrm{sm}}$ for every $0<a\leq a_\star$, where $a_\star>0$,
  and $\|PG_{T,\tau}e_k\|_2>0$. If $\cA$ is worst-case sound over
  $\cC_{\mathrm{sm}}$ on $\pi$ and
  \begin{equation}
    \max_{0<a\leq a_\star}
    \left[a\|PG_{T,\tau}e_k\|_2-C_{T,\tau,P}a^2\right]
    >\bar\rho,
    \label{eq:corollary-threshold}
  \end{equation}
  then $\cA$ either abstains or is not $\bar\rho$-informative on $\pi$.

  \item[(ii)] Let $\cC$ be any residual class and suppose $\cA$ is worst-case
  sound over $\cC$ on $\pi$. If
  \begin{equation}
    \exists r\in\cC,\quad
    \exists t\in\{0,\ldots,T\}
    \quad\text{such that}\quad
    Px_t^r\in\mathcal O,
    \label{eq:corollary-unsafe-witness}
  \end{equation}
  then $\cA$ returns no safety certificate on $\pi$, and
  \begin{equation}
    \operatorname{Cert}^{\mathrm{wc}}_{\pi}(\cA)=0.
    \label{eq:corollary-zero-certification}
  \end{equation}
  Under the Level-B hypotheses, the construction supplies this witness whenever
  \begin{equation}
    \exists a\in(0,a_\star],\quad
    \exists s\in\{+,-\}
    \quad\text{such that}\quad
    Px_T^{(s)}(a)\in\mathcal O.
    \label{eq:constructed-unsafe-witness}
  \end{equation}
\end{enumerate}
\end{corollary}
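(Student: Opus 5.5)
For part (i), the plan is to reduce directly to the Level-B conclusion of Theorem~\ref{thm:trilemma}. Fix any $a\in(0,a_\star]$. By the Level-B hypotheses, both $r_0\pm a\phi e_k$ lie in $\cC_{\mathrm{sm}}$ and satisfy single-transition localization. They also agree with $r_0$ at every sample location and share the noise realization $\eta_i=\widetilde r_i-r_0(z_i)$, so $\cA$ receives identical inputs under both systems. Since $\cA$ is deterministic, it either abstains under both or returns the same tube $\mathbf X$ under both.

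In the abstention case there is nothing to prove. Otherwise, worst-case soundness over $\cC_{\mathrm{sm}}$ forces $x_T^{(+)},x_T^{(-)}\in\cX_T$. Then \eqref{eq:nonlinear-width-lower-bound} (or the affine variant \eqref{eq:affine-width-lower-bound}) gives
$\max_t\rho_P(\cX_t)\geq\rho_P(\cX_T)\geq a\|PG_{T,\tau}e_k\|_2-C_{T,\tau,P}a^2$.

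The key observation is that the returned tube $\mathbf X$ does not depend on $a$: the data are identical for every amplitude, and $\cA$'s input $(f_0,\cD_N,\sigma,\pi)$ contains no $a$. So this single tube must satisfy the inequality for every $a\in(0,a_\star]$ simultaneously. Taking the supremum over $a$, the right side is a continuous concave quadratic on a bounded interval and attains its max on $[0,a_\star]$. That max is positive and not at $0$, so it lies in $(0,a_\star]$. (If $a_\star=\infty$, which happens only when $C=0$, the supremum is infinite, and the strict inequality still holds.) Thus $\rho_P(\cX_T)>\bar\rho$, and $\cA$ is not $\bar\rho$-informative. The one step needing care is this uniformity in $a$, i.e. that one tube serves all amplitudes; it follows from indistinguishability across all $a$ at once, not merely within each $\pm$ pair.

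For part (ii), suppose $\cA$ returns a certificate. Its tube must then be valid for the witness $r$, so $x_t^r\in\cX_t$ and $Px_t^r\in P\cX_t\cap\mathcal O$. This contradicts the certificate condition $P\cX_t\cap\mathcal O=\varnothing$. Hence no certificate is returned and $\operatorname{Cert}^{\mathrm{wc}}_\pi(\cA)=0$.

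For the final claim, Level B shows that $r_0\pm a\phi e_k\in\cC_{\mathrm{sm}}$ for $a\in(0,a_\star]$, so \eqref{eq:constructed-unsafe-witness} provides an $r$ in the class with $Px_T^r\in\mathcal O$. Applying the first half of (ii) with $t=T$ and $\cC=\cC_{\mathrm{sm}}$ finishes the proof.
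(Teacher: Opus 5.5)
Your proposal is correct and follows the paper's proof. In (i) you apply the Level-B terminal half-width bound and use an amplitude in $(0,a_\star]$ where $a\|PG_{T,\tau}e_k\|_2-C_{T,\tau,P}a^2>\bar\rho$, and in (ii) you use the same containment-versus-disjointness contradiction, with $r_0\pm a\phi e_k\in\cC_{\mathrm{sm}}$ as the witness. Two small remarks: the uniformity in $a$ that you flag as delicate is automatic, because $\cA$'s input is fixed, and it is not even needed, because a single maximizing amplitude suffices; and your $a_\star=\infty$ case is vacuous, since $a_{\mathrm{class}}\le\mu_0/b_0=\mu_0<\infty$.
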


\begin{proof}
For part~(i), Theorem~\ref{thm:trilemma} gives the terminal half-width lower
bound for every $a\in(0,a_\star]$. Taking the maximum over $a$ makes that
half-width exceed $\bar\rho$, so a worst-case-sound certifier either abstains or
violates $\bar\rho$-informativeness.

For part~(ii), soundness places $x_t^r$ in $\cX_t$, so
$Px_t^r\in P\cX_t\cap\mathcal O$, contradicting the safety-certificate
condition. Hence $\cA$ does not certify and
$\operatorname{Cert}^{\mathrm{wc}}_{\pi}(\cA)=0$.
\end{proof}

\section{Nominal-Map Enclosure}
\label{sec:nominal-enclosure}

Fix a propagation step $t$. The query set is the zonotope
$Z_t=\langle c_{Z,t},G_{Z,t}\rangle\subset\R^{d_z}$ with $d_z=n+m$, the stacked
state and applied control, and generator matrix
$G_{Z,t}=[\,g_{t,1}\ \cdots\ g_{t,\gamma_t}\,]$. Its interval hull is
$\mathrm{Box}(Z_t)=[\,c_{Z,t}-r_t,\;c_{Z,t}+r_t\,]$ with componentwise half-width
$r_t:=\sum_{j=1}^{\gamma_t}\lvert g_{t,j}\rvert=\lvert G_{Z,t}\rvert\mathbf 1$,
the absolute values taken componentwise. Every $z\in Z_t$ is
$z=c_{Z,t}+G_{Z,t}\xi$ with $\lVert\xi\rVert_\infty\leq1$, so
$\lvert z-c_{Z,t}\rvert\leq r_t$ componentwise and $Z_t\subseteq\mathrm{Box}(Z_t)$.

Let $J_t=D_zf_0(c_{Z,t})$. For each output coordinate $k$ let
$[\,H^-_{t,k},H^+_{t,k}\,]$ be an interval matrix that brackets the Hessian of
$f_{0,k}$ over the hull, meaning
$H^-_{t,k,ij}\leq\partial^2 f_{0,k}(\zeta)/\partial z_i\partial z_j\leq H^+_{t,k,ij}$
for every $\zeta\in\mathrm{Box}(Z_t)$. With
$B_{t,k,ij}=\max(\lvert H^-_{t,k,ij}\rvert,\lvert H^+_{t,k,ij}\rvert)$ the
zero-centered componentwise half-width is
\begin{equation}
  q_{t,k}=\tfrac12\,r_t^{\top}B_{t,k}\,r_t
         =\tfrac12\sum_{i,j}r_{t,i}\,B_{t,k,ij}\,r_{t,j}.
  \label{eq:qt-formula}
\end{equation}
The interval Hessian is constructed using monotone and branch-aware interval
extensions of the trigonometric and reciprocal terms.

\begin{proposition}[Nominal-map enclosure]
\label{prop:nominal-enclosure}
Suppose $f_0$ is twice continuously differentiable on $\mathrm{Box}(Z_t)$ and the
interval Hessian bounds $[\,H^-_{t,k},H^+_{t,k}\,]$ are finite. With $q_t$ from
Eq.~\eqref{eq:qt-formula},
\begin{equation}
  f_0(Z_t)\ \subseteq\ \big\langle f_0(c_{Z,t}),\,
   [\,J_tG_{Z,t}\quad\operatorname{diag}(q_t)\,]\big\rangle,
  \label{eq:nominal-enclosure}
\end{equation}
where the generator block $\operatorname{diag}(q_t)$ is centered at zero. If
$f_0$ is affine on $\mathrm{Box}(Z_t)$ then $q_t=0$ and the inclusion is the
exact image $\langle f_0(c_{Z,t}),J_tG_{Z,t}\rangle=f_0(Z_t)$. If the
interval-Hessian construction returns no finite bound on $\mathrm{Box}(Z_t)$, the
half-width is $+\infty$ and the certifier returns the domain abstention
$\abstainD$ at step $t$ under a distinct internal cause from a query that leaves
$\cZ_{\mathrm{cert}}$, and extends no tube past it.
\end{proposition}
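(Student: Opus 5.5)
The enclosure follows from a componentwise second-order Taylor expansion of $f_0$ about the center $c_{Z,t}$, taken with the Lagrange (mean-value) form of the remainder. Fix $z\in Z_t$ and write $z=c_{Z,t}+G_{Z,t}\xi$ with $\|\xi\|_\infty\le1$. Set $v:=z-c_{Z,t}=G_{Z,t}\xi$. As recorded just before the statement, $|v|\le r_t$ componentwise.

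$\mathrm{Box}(Z_t)$ is convex, so the segment from $c_{Z,t}$ to $z$ lies in it, and $f_{0,k}$ is $C^2$ there. For each output coordinate $k$, Taylor's theorem with Lagrange remainder applied to $s\mapsto f_{0,k}(c_{Z,t}+sv)$ gives some $\zeta_k$ on that segment with
\[
f_{0,k}(z)=f_{0,k}(c_{Z,t})+(J_tv)_k+\tfrac12 v^\top \nabla^2 f_{0,k}(\zeta_k)\,v .
\]
Call the last term $\theta_k$. Since $\zeta_k\in\mathrm{Box}(Z_t)$, each Hessian entry lies in $[H^-_{t,k,ij},H^+_{t,k,ij}]$, so its absolute value is at most $B_{t,k,ij}$. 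Combining this with $|v_i|\le r_{t,i}$ gives
\[
|\theta_k|\le\tfrac12\sum_{i,j}|v_i|B_{t,k,ij}|v_j|\le q_{t,k}.
\]
If $q_{t,k}>0$, set $\beta_k:=\theta_k/q_{t,k}\in[-1,1]$. If $q_{t,k}=0$, then $\theta_k=0$ and we take $\beta_k:=0$.

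This yields
\[
f_0(z)=f_0(c_{Z,t})+J_tG_{Z,t}\xi+\operatorname{diag}(q_t)\beta .
\]
The stacked coefficient $(\xi,\beta)$ has $\infty$-norm at most $1$, so $f_0(z)$ lies in the zonotope of the statement, whose $\operatorname{diag}(q_t)$ block is zero-centered. Because $z\in Z_t$ was arbitrary, this proves the inclusion.

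In the affine case the Hessian vanishes on the box, so $B_{t,k}=0$ and $q_t=0$. Moreover $f_0(z)=f_0(c_{Z,t})+J_tG_{Z,t}\xi$ exactly. As $\xi$ ranges over the full unit $\infty$-ball, this map is onto $\langle f_0(c_{Z,t}),J_tG_{Z,t}\rangle$, which gives equality rather than mere inclusion.

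The final claim is definitional rather than analytic. A non-finite interval Hessian bound makes some $B_{t,k,ij}=\infty$, so the corresponding $q_{t,k}$ is $+\infty$. Algorithm~\ref{alg:forereach} tests exactly this condition after its domain check and returns $(\abstainD,\tau_D=t)$ before forming $\widehat\cX_{t+1}$, so no tube is extended past step $t$. The cause is recorded separately from the $\mathrm{Box}(Z_t)\nsubseteq\cZ_{\mathrm{cert}}$ exit.

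The only delicate point I expect is that the mean-value point $\zeta_k$ differs for each coordinate $k$. This is handled by the componentwise form of the expansion together with a coordinate-specific Hessian bracket, and no vector-valued mean-value theorem is needed. A minor subtlety is the convention $0\cdot\infty$ when some $r_{t,i}=0$. I would resolve it by restricting the bracket requirement to the directions in which the box has positive width, or by simply reading the abstention rule as triggered by any infinite bracket entry.
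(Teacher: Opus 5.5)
Your proof is correct and matches the paper's: a coordinatewise second-order Taylor expansion with Lagrange remainder along the segment from $c_{Z,t}$ to $z$ inside the convex set $\mathrm{Box}(Z_t)$, with the remainder bounded by $q_{t,k}$ via $|z-c_{Z,t}|\le r_t$ and $B_{t,k}$, and with the affine case eliminating the remainder. Your explicit $\beta_k$, the surjectivity argument for equality in the affine case, and reading the abstention clause as Algorithm~\ref{alg:forereach}'s non-finite-$q_t$ check fill in details the paper leaves implicit without changing the argument.
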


\begin{proof}
Fix $z\in Z_t$ and an output coordinate $k$, and set $d=z-c_{Z,t}$. Since
$\mathrm{Box}(Z_t)$ is convex and contains $c_{Z,t}$ and $z$, the segment
$\zeta(\alpha)=c_{Z,t}+\alpha d$ for $\alpha\in[0,1]$ lies in
$\mathrm{Box}(Z_t)$, where $f_{0,k}$ is twice continuously differentiable.
Taylor's theorem with the Lagrange remainder gives some $\alpha_k\in(0,1)$ with
\begin{equation*}
  f_{0,k}(z)=f_{0,k}(c_{Z,t})+J_{t,k}\,d
   +\tfrac12\,d^{\top}\nabla^2 f_{0,k}\!\big(\zeta(\alpha_k)\big)\,d .
\end{equation*}
Because $\zeta(\alpha_k)\in\mathrm{Box}(Z_t)$, the remainder obeys
\begin{equation*}
  \begin{aligned}
  \Big|\tfrac12\,d^{\top}\nabla^2 f_{0,k}(\zeta(\alpha_k))\,d\Big|
   &\leq\tfrac12\sum_{i,j}\lvert d_i\rvert\,B_{t,k,ij}\,\lvert d_j\rvert\\
   &\leq\tfrac12\sum_{i,j}r_{t,i}\,B_{t,k,ij}\,r_{t,j}=q_{t,k},
  \end{aligned}
\end{equation*}
using $\lvert\partial^2 f_{0,k}(\zeta)/\partial z_i\partial z_j\rvert\leq B_{t,k,ij}$
on the hull and $\lvert d\rvert\leq r_t$ componentwise. Hence
$f_{0,k}(z)\in f_{0,k}(c_{Z,t})+J_{t,k}d+[-q_{t,k},q_{t,k}]$. Collecting the $n$
output coordinates, $f_0(z)-f_0(c_{Z,t})-J_td\in\operatorname{diag}(q_t)[-1,1]^n$,
and $d=G_{Z,t}\xi$ with $\lVert\xi\rVert_\infty\leq1$, so
$f_0(z)\in\langle f_0(c_{Z,t}),[J_tG_{Z,t}\ \operatorname{diag}(q_t)]\rangle$.
As $z\in Z_t$ was arbitrary, Eq.~\eqref{eq:nominal-enclosure} follows. When $f_0$
is affine the Hessian vanishes, so $B_{t,k}=0$ and $q_t=0$, and the second-order
term is absent, leaving the exact affine image.
\end{proof}

The certifier queries $Z_t$ only after the domain test places
$\mathrm{Box}(Z_t)\subseteq\cZ_{\mathrm{cert}}$, and $f_0$ is twice continuously
differentiable on $\cZ_{\mathrm{cert}}$, so the hypothesis of
Proposition~\ref{prop:nominal-enclosure} holds at every completed step. Because
$\mathrm{Box}(Z_t)$ is the convex hull of its vertices and $\cZ_{\mathrm{cert}}$
is convex, containment of every hull vertex implies containment of the full hull,
and the certifier verifies the inclusion by testing the finitely many vertices. A
query whose hull reaches a point where $f_0$ loses that smoothness, such as a steering
interval that meets the tangent pole, produces an infinite interval Hessian and
the domain abstention above, so a finite remainder is never formed on a region
where $f_0$ is not twice differentiable.

\subsection{Support-Function Evaluation of the Set-to-Sample Distance}
\label{sec:support-distance}

For $v\in\R^d$, the support function of a zonotope $Z=\langle c_Z,G_Z\rangle$ is
\begin{equation}
  h_Z(v):=\sup_{z\in Z}v^\top z=v^\top c_Z+\|G_Z^\top v\|_1.
  \label{eq:zonotope-support}
\end{equation}
The main-text set-to-sample distance
$D_{ik}(Z)=\sup_{z\in Z}\sum_{\ell=1}^{d_z}L_{k\ell}|z_\ell-z_{i,\ell}|$ admits an
exact center-generator form. With the query written as the zonotope
$Z=\langle c_Z,G_Z\rangle$ and, for each residual coordinate
$k\in\{1,\ldots,n\}$,
$W_k:=\operatorname{diag}(L_{k1},\ldots,L_{kd_z})\in\R_{\geq0}^{d_z\times d_z}$,
\begin{align}
  D_{ik}(Z)
  &=\sup_{z\in Z}\|W_k(z-z_i)\|_1\notag\\
  &=\max_{s\in\{-1,1\}^{d_z}}
  \left[s^\top W_k(c_Z-z_i)+\|G_Z^\top W_ks\|_1\right].
  \label{eq:zonotope-set-distance}
\end{align}
The first equality rewrites the componentwise Lipschitz distance as a weighted
$\ell_1$ norm. The second uses the $\ell_1$ dual representation and the zonotope
support function in Eq.~\eqref{eq:zonotope-support}. The implementation evaluates
the closed form directly in center-generator coordinates.

\section{One-Sided Falsifiability Statements}

\subsection{Comparison of Six Assumption Families}

\begin{table*}[t]
\centering
\scriptsize
\caption{Comparison of the operative assumptions used by six method families.
Condition (i) asks whether the assumption can be
directly contradicted using only the available one-step residual data.
Condition (ii) asks whether the content of a true declaration is sufficient to
support out-of-support plan-level validity.}
\label{tab:assumption-audit}
\begin{tabularx}{\textwidth}{@{}p{0.20\textwidth}p{0.26\textwidth}p{0.12\textwidth}p{0.12\textwidth}Y@{}}
\toprule
Family & Assumption in this comparison & (i) Directly refutable? &
(ii) Sufficient if true? & Basis \\
\midrule
Distributional exchangeability &
Deployment queries follow the calibration law &
No &
No &
It gives marginal coverage for random queries. Selected-plan worst-case
containment requires a plan-level premise. \\
\addlinespace
Parametric residual law &
The fitted residual box transfers to deployment &
No &
No &
Gaussian shape is rejectable. Gaussian support remains unbounded and cannot
provide worst-case containment. \\
\addlinespace
Local density or fitted smoothness &
Every plan query has adequate supported neighbors or a valid fitted scale &
No &
No &
The missing-neighbor condition occurs only in the unsampled deployment region. \\
\addlinespace
Linearity &
The residual is affine and the augmented calibration regressor has full column rank &
Yes &
Yes, under the stated excitation condition &
Affine-consistency emptiness gives direct refutation. Full-rank regressors make
the bounded-noise parameter set bounded. \\
\addlinespace
Set-membership containment without regularity &
Containing every calibration residual is sufficient &
No &
No &
The containment check constrains residuals at the sample points and leaves
escape between them unrestricted. \\
\addlinespace
Componentwise residual regularity &
A declared componentwise bound controls residual growth between sample points &
Yes, when data expose a violating pair &
Yes, conditional on the declaration being true &
The pairwise inequality gives a contradiction certificate, and the same true
regularity content supports extrapolation between observations. As a family over
all declared bounds, componentwise regularity strictly contains the affine
residuals. \\
\bottomrule
\end{tabularx}
\end{table*}

\begin{proposition}[Declarations satisfying both criteria in this comparison]
\label{prop:declaration-uniqueness}
Restrict attention to the six assumption families and their specific
instantiations in Table~\ref{tab:assumption-audit}. Restrict the available
evidence to the one-step residual data $\cD_N$ and noise bound $\sigma$.
Deployment trajectories, online labels, and deployment-distribution
information define richer information settings.

Exactly two families in this comparison satisfy both conditions.
\begin{enumerate}
  \item The declaration admits a data-only inconsistency certificate
  using the available one-step residual observations.
  \item If the declared statement is actually true, its content is sufficient
  to construct an out-of-support model or residual envelope that can support
  worst-case plan-level validity when combined with sound propagation.
\end{enumerate}
They are affine residual structure and componentwise residual regularity.
Let $d=n+m$, let $\bar z_i=(z_i^\top,1)^\top\in\R^{d+1}$, and define the
augmented regressor
\begin{equation}
  \bar Z_N
  :=
  \begin{bmatrix}
    \bar z_1^\top\\[-1mm]
    \vdots\\[-1mm]
    \bar z_N^\top
  \end{bmatrix}.
  \label{eq:augmented-affine-regressor}
\end{equation}
For affine structure, the inconsistency certificate is emptiness of
\begin{equation}
  \mathcal{A}(\cD_N,\sigma)
  :=
  \left\{
  (M,b):
  \begin{array}{l}
  |\widetilde r_{i,k}-(Mz_i+b)_k|\leq\sigma_k\\
  \text{for all }i,k
  \end{array}
  \right\}.
  \label{eq:affine-consistency-set}
\end{equation}
and condition (ii) additionally requires
\begin{equation}
  \operatorname{rank}(\bar Z_N)=d+1.
  \label{eq:affine-excitation}
\end{equation}
This excitation condition makes every nonempty bounded-noise affine-consistency
set $\mathcal A(\cD_N,\sigma)$ bounded.
For componentwise residual regularity, the certificate is an observed pair and
coordinate satisfying
  \begin{equation}
    |\widetilde r_{i,k}-\widetilde r_{j,k}|
    >
    \sum_{\ell}L_{k\ell}|z_{i,\ell}-z_{j,\ell}|
    +2\sigma_k.
    \label{eq:pairwise-contradiction}
  \end{equation}
Every affine residual $r(z)=Mz+b$ obeys
\begin{equation}
  |r_k(z)-r_k(z')|
  \leq
  \sum_{\ell}|M_{k\ell}|\,|z_\ell-z_\ell'|,
  \label{eq:affine-implies-regularity}
\end{equation}
so it belongs to the componentwise class with $L=|M|$. The inclusion is strict
on any domain containing a nontrivial line segment because the componentwise
class also contains nonlinear Lipschitz residuals. Componentwise residual regularity
is therefore the weaker of the two declarations in this comparison. Non-violation of
\eqref{eq:pairwise-contradiction} leaves both declarations unresolved outside
the sampled geometry.
\end{proposition}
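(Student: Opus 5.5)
The proof checks, for each of the six families in Table~\ref{tab:assumption-audit}, the two criteria separately. For the two positive families we exhibit the certificate and the envelope. For the other four we give a counterexample to at least one criterion. We end with the strict inclusion of affine residuals in the componentwise class.

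\emph{Affine structure.} If $r_\star(z)=Mz+b$ is true, then $(M,b)\in\mathcal A(\cD_N,\sigma)$, since the realized noise satisfies the constraints in \eqref{eq:affine-consistency-set}. So emptiness of $\mathcal A(\cD_N,\sigma)$, which is a finite LP feasibility question, certifies inconsistency. For criterion (ii), fix a coordinate $k$. The rows of $\mathcal A$ constrain $(M_{k\cdot},b_k)$ to the polyhedron $\{\theta:|\widetilde r_{i,k}-\bar z_i^\top\theta|\le\sigma_k\}$. Its recession cone is $\{\theta:\bar Z_N\theta=0\}$, and this cone is $\{0\}$ under \eqref{eq:affine-excitation}. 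Hence any nonempty $\mathcal A$ is bounded. Then $\sup_{(M,b)\in\mathcal A}\sup_{z\in Z}|Mz+b|$ is finite on every bounded query set $Z$. This gives a residual envelope that can be fed into the propagation of Eq.~\eqref{eq:propagation}, and the proof of Theorem~\ref{thm:forereach} then carries over with $\cE(Z)$ replaced by this envelope.

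\emph{Componentwise regularity.} Proposition~\ref{prop:falsification}, via \eqref{eq:two-noise-slacks}, gives the pairwise certificate \eqref{eq:pairwise-contradiction}. Criterion (ii) is Theorem~\ref{thm:forereach}, conditional on the declaration.

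\emph{Negative families.} For each of the four remaining instantiations we exhibit one residual that satisfies the data, and possibly the declaration, yet defeats the criterion. Distributional exchangeability refers to the law of deployment queries, which isolated one-step data never sample, so it is not refutable. It is also not sufficient, because marginal coverage over random queries says nothing about a selected plan: a bump as in \eqref{eq:adversarial-residuals}, placed on a measure-small set, breaks worst-case containment. For the parametric law, Gaussian support is unbounded, so there is no finite worst-case envelope. Fitted box transfer concerns deployment behavior and cannot be contradicted by the calibration points. The local-density premise fails only in the unsampled region, so the data never exhibit a violation. Set-membership without regularity is exactly the class $\cC(\cD_N,\sigma)$: every observation set admits a consistent residual, so this family is never refuted. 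Level A of Theorem~\ref{thm:trilemma} shows it is not sufficient.

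\emph{Strict inclusion.} \eqref{eq:affine-implies-regularity} follows from the triangle inequality applied to $\sum_\ell M_{k\ell}(z_\ell-z'_\ell)$, so $L=|M|$ works. For strictness, take a segment $z^0+sv$ in the domain with $v_\ell\ne0$ for some $\ell$. The residual $r_k(z)=\sin(z_\ell)$ is componentwise Lipschitz with $L_{k\ell}=1$ but is not affine along the segment. The final non-resolution claim is the second sentence of Proposition~\ref{prop:falsification} together with the Level A bump construction, which leaves the data unchanged.

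The main obstacle is the "exactly two" claim for the negative families. That claim depends on the specific instantiations in the table rather than on a formal theorem, so each non-refutability argument must be made relative to the stated evidence $(\cD_N,\sigma)$ alone. I would do this case by case: for each negative family, give one explicit data-consistent residual that either satisfies the premise whatever the data are, or violates containment on a plan query. The bump of Theorem~\ref{thm:trilemma} serves as the universal witness.
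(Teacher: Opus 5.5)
Your proposal is correct and follows the paper's proof essentially step for step: emptiness of $\mathcal A(\cD_N,\sigma)$ as the affine certificate, boundedness from the rank condition (your recession-cone argument is the same fact as the paper's injectivity of $\theta\mapsto\bar Z_N\theta$), the pairwise test plus the Lipschitz-cone envelope for componentwise regularity, and a $\sin$ witness for strict inclusion. The only difference is that you argue the four negative rows case by case, whereas the paper just reads those outcomes off Table~\ref{tab:assumption-audit}.
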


\begin{proof}
The six row-wise outcomes are listed in
Table~\ref{tab:assumption-audit}. Emptiness of
$\mathcal A(\cD_N,\sigma)$ is a data-only contradiction certificate for affine
structure. Under \eqref{eq:affine-excitation}, the linear map from each affine
parameter row to its values on the sample points is injective. Bounded observations
and bounded noise therefore make every parameter row in
$\mathcal A(\cD_N,\sigma)$ bounded. If the residual is affine, its true
parameter belongs to this set. The image
\begin{equation}
  \mathcal E_{\mathrm{aff}}(z)
  :=
  \{Mz+b:(M,b)\in\mathcal A(\cD_N,\sigma)\}
  \label{eq:affine-residual-envelope}
\end{equation}
is a bounded residual envelope at every fixed query and contains
the true affine residual. Combining this envelope with sound propagation gives
condition (ii) for the affine row.

For componentwise regularity, the pairwise test
\eqref{eq:pairwise-contradiction} directly contradicts an exposed violation,
and a true declaration supplies the Lipschitz-cone envelope used by sound
propagation. Thus the two stated families satisfy both conditions under their
respective premises.

For the ordering, if $r(z)=Mz+b$, then the triangle inequality gives
\eqref{eq:affine-implies-regularity} with $L=|M|$. Conversely, let $v\neq0$ be
the direction of a nontrivial line segment contained in the domain. After an
irrelevant translation, $r(z)=\sin(v^\top z)e_1$ is componentwise Lipschitz but
is not affine on that segment. Thus affine structure is a strict subclass,
which proves the stated both-criteria claim within this comparison.
\end{proof}

\begin{remark}[Affine control example]
The linear-pair experiment yields a nonempty affine-consistency set and full
reachable-tube coverage on the linear target, together with an exposed
affine inconsistency on the nonlinear target. Without corridor support, wrong
declarations pass the pairwise test yet remain unsound over a band 0.092 wide on
the linear target and 0.850 wide on the nonlinear knee target, so the one-sided
test nearly suffices on the linear target but not once the residual class is
nonlinear. These outcomes illustrate the two
corresponding table entries. Proposition~\ref{prop:declaration-uniqueness} derives affine
sufficiency from the explicit excitation condition
\eqref{eq:affine-excitation}.
\end{remark}

\begin{proposition}[Finite-data non-verification]
\label{prop:nonverification}
Fix any finite residual dataset and any finite declared componentwise
Lipschitz bound $L$. Suppose an open ball in $\cZ$ contains no observed
sample point. There exist residual functions that agree at every observed sample point and
therefore produce exactly the same values in every observed pairwise test, yet
violate the declared Lipschitz bound inside that unobserved ball. Hence
\begin{equation}
  \begin{gathered}
  \text{no observed pairwise violation}\\
  \not\Longrightarrow\
  \text{the declaration holds on all plan queries}.
  \end{gathered}
  \label{eq:nonverification}
\end{equation}
Finite one-step data can therefore reject a declaration exposed by the sampled
geometry. Non-rejection leaves global truth unresolved.
\end{proposition}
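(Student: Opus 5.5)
The plan is an explicit two-function construction that reuses the bump from the data-free rollout neighborhood. Let $B=B_\varepsilon(z^\star)\subseteq\cZ$ be the given open ball with no observed sample point. Two cases arise. If it matters whether the ball lies in $\cZ_{\mathrm{cert}}$, shrink $B$ to a smaller ball $B'\subseteq B\cap\operatorname{int}\cZ_{\mathrm{cert}}$ when that intersection has interior. Otherwise the violation inside $B$ already contradicts the declaration stated on $\cZ$.

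Take any baseline residual $r_0$ that satisfies the declaration $L$ and is consistent with the data. If no such $r_0$ exists, the declaration is already refuted, and the statement concerns only non-rejected declarations. The first step is to handle that case concretely: when the pairwise test \eqref{eq:pairwise-contradiction} passes, build $r_0$ coordinatewise as the McShane extension
\[
r_{0,k}(z)=\min_i\Big[\widetilde r_{i,k}+\sigma_k+\textstyle\sum_\ell L_{k\ell}|z_\ell-z_{i,\ell}|\Big].
\]
This function is $L_{k\cdot}$-weighted-$\ell_1$ Lipschitz. Non-violation of the pairwise test guarantees $r_{0,k}(z_i)\ge \widetilde r_{i,k}-\sigma_k$, and taking $j=i$ in the minimum gives $r_{0,k}(z_i)\le\widetilde r_{i,k}+\sigma_k$. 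Hence $r_0\in\cC(\cD_N,\sigma)$.

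Next, let $\psi$ be a smooth bump supported in $B$ with $\psi(z^\star)=1$, and set $r_1=r_0+a\psi e_k$ for a fixed coordinate $k$. Outside $B$ we have $r_1=r_0$, so $r_1(z_i)=r_0(z_i)$ at every sample. The residual observations $\widetilde r_i$, the admissible noise realization, and every quantity in \eqref{eq:pairwise-contradiction} are therefore identical. The test outcome for $r_1$ is the same as for $r_0$, and this step is immediate.

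The key step is to pick $a$ so that $r_1$ violates the declared bound inside $B$. Choose $z'\in\partial\supp\psi$, or any point of $B$ with $\psi(z')=0$, at distance at most $\varepsilon$ from $z^\star$. Then
\[
|r_{1,k}(z^\star)-r_{1,k}(z')|\ \ge\ a-|r_{0,k}(z^\star)-r_{0,k}(z')|\ \ge\ a-\textstyle\sum_\ell L_{k\ell}|z^\star_\ell-z'_\ell|.
\]
Taking $a>2\sum_\ell L_{k\ell}|z^\star_\ell-z'_\ell|$, which is finite because $L$ is finite, makes the left side exceed the declared bound for that pair. Both points lie in $B$, so the violation occurs inside the unobserved ball. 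If $L_{k\cdot}=0$, any $a>0$ works.

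The main obstacle is the bookkeeping around the domain on which the declaration is stated. Assumption~\ref{ass:regularity} quantifies over $\cZ_{\mathrm{cert}}$, so the violating pair must lie there; I would assume, or state explicitly, that the unobserved ball meets $\cZ_{\mathrm{cert}}$ in an open set. Given $r_0$ and $r_1$, \eqref{eq:nonverification} follows: both pass every observed pairwise test, yet at most one of them satisfies the declaration on the plan queries.
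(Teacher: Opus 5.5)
Your proof is correct and follows the paper's argument: you add a large-amplitude bump supported in the sample-free ball to a data-consistent baseline, which leaves every observation and pairwise test unchanged while breaking the finite declared bound inside the ball. Your refinements go beyond what the paper makes explicit: the McShane-extension baseline shows that a passing test always admits a declaration-satisfying consistent residual, so the two residuals are genuinely indistinguishable; you give the explicit threshold $a>2\sum_\ell L_{k\ell}|z^\star_\ell-z'_\ell|$; and you note that the ball must meet $\cZ_{\mathrm{cert}}$.
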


\begin{proof}
Let $B$ be an open ball disjoint from the finite set of sample points and choose a
nonzero $\psi\in C_c^\infty(B)$. For any data-consistent baseline residual
$r_0$, define $r_a=r_0+a\psi e_k$. Every $r_a$ agrees with $r_0$ at all
sample points, so all observed pairwise tests are identical.

The declared matrix $L$ is finite. By increasing $a$, the variation of
$a\psi e_k$ between two points of $B$ exceeds the finite right side of the
declared componentwise inequality, after accounting for the fixed variation of
$r_0$. Thus some $r_a$ violates the declaration inside $B$ while every
observed test remains unchanged.
\end{proof}

Figure~\ref{fig:supp-adversarial} shows this adversarial residual family.

\begin{figure}[htbp]
  \centering
  \begin{subfigure}{0.85\columnwidth}
    \centering\includegraphics[width=\textwidth]{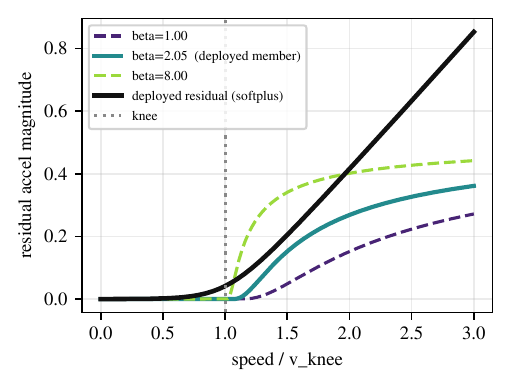}
    \caption{Escape profiles stay flat below the speed knee and steepen with the
    family parameter, where the deployed residual is a finite-parameter member.}
  \end{subfigure}
  \\[6pt]
  \begin{subfigure}{0.85\columnwidth}
    \centering\includegraphics[width=\textwidth]{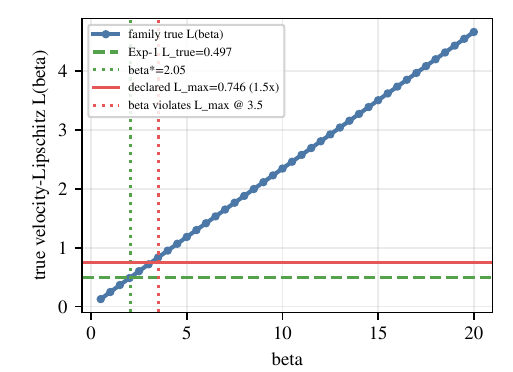}
    \caption{The true velocity Lipschitz constant grows without bound with the
    family parameter, so any finite declared bound is eventually violated once
    the escape steepens beyond the knee.}
  \end{subfigure}
  \caption{Adversarial residual family. Finite low-speed observations leave a plug-in
  slope estimate uncertified as a global bound.}
  \label{fig:supp-adversarial}
\end{figure}

\subsection{Slope-Estimator Declarations}

On the adversarial residual family, which is flat at every
sampled point and increases beyond the speed knee, the
LACKI~\citep{calliess2020lacki}, Strongin~\citep{strongin2000global}, and
Knuth-Chou-style slope estimators return 0.000,
0.000, and 0.000. These
implementations isolate the slope estimator used within LMTD-RRT
\citep{knuth2021learneddynamics,knuth2022correction}, whose full pipeline
certifies a feedback policy, whereas the present comparison applies only the
slope estimator to the fixed open-loop plan. Applying the Lipschitz declaration used by the certifier produces a
false certificate, while containment requires 31.710
times that declared bound, since identical sampled slopes leave residual
growth beyond the knee undetermined.

\section{CP-SLS-MPC Certificate-Budget Decomposition by Calibration Size}

We evaluate the published weighted-conformal, drift, and tube-budget equations
on the fixed candidate plans. The comparison retains the weighted-conformal
quantile, the total-variation drift, and the tube-coupled budget, and uses a
local covariance estimate, fixed response maps, and a fixed pre-execution
calibration set in place of the learned covariance model, the joint SLS-MPC
optimization, and online augmentation.

For horizon $T$, target miscoverage
$\alpha$, per-step level $\alpha_k=\alpha/T$, one-step calibration residual
$\varepsilon_k:=\varepsilon(x_k,u_k)$, and conformal set $E_k:=E(z_k,v_k)$, the
certificate of \citet{srinivasan2026cpslsmpc} reads
\begin{align}
  \Pr\!\left[\textstyle\bigcap_{k=1}^{T}\varepsilon_k\in E_k\right]
  &\geq 1-\sum_{k=1}^{T}\sigma_k,\notag\\
  \sigma_k&=\alpha_k+2\sum_i \widetilde w_i^{\,k}\,
    d_{\mathrm{TV}}(S^{i,k},S^{k,k})\notag\\
  &\quad+\gamma(R_k),\notag\\
  \gamma(R_k)&=2\widehat\varepsilon\,M(R_k),\notag\\
  \widetilde w_i^{\,k}&=\frac{w_i^{\,k}}{1+\sum_j w_j^{\,k}}.
  \label{eq:cpsls-budget}
\end{align}
The drift obeys $d_{\mathrm{TV}}(S^{i,k},S^{k,k})\leq
\widehat\varepsilon\,\lVert(z_k,v_k)-(x_i,u_i)\rVert_2$. Here $w_i^{\,k}$ are the
localized calibration weights, $\widetilde w_i^{\,k}$ their query-self-normalized
form, $d_{\mathrm{TV}}$ the total-variation distance between the residual laws
$S^{i,k}$ and $S^{k,k}$, $\widehat\varepsilon$ the estimated total-variation
Lipschitz drift constant, and $M(R_k)$ the maximum tube-axis length of the
reachable set $R_k$. The weighted split-conformal quantile $q_{1-\alpha_k}$
becomes infinite once the effective mass $\sum_j w_j^{\,k}$ falls below the
starvation floor $1/\alpha_k-1$, which the query self-mass in
$\widetilde w_i^{\,k}$ induces.

Two failure modes are distinct. Below the starvation floor the certificate radius
is infinite as above, while above the floor every per-step radius is finite yet
$\sum_k\sigma_k\geq1$ still holds, so the reported probability bound
$1-\sum_k\sigma_k$ is finite but vacuous. The certificate is non-vacuous only when
$\sum_k\sigma_k<1$.

Figure~\ref{fig:supp-cpslsmpc} reports the effective calibration mass and the
complete three-term budget decomposition across the tested calibration sizes.

\begin{figure*}[t]
  \centering
  \begin{subfigure}{0.48\textwidth}
    \centering\includegraphics[width=0.82\textwidth]{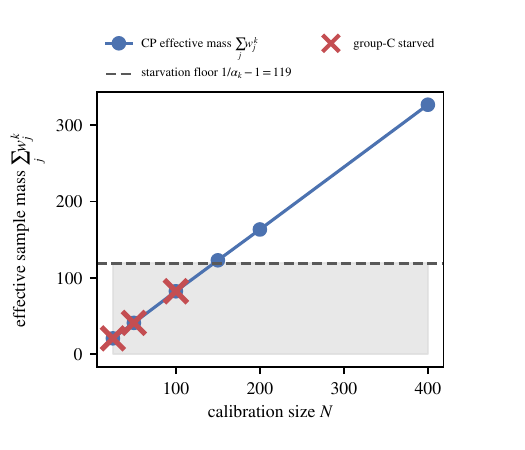}
    \caption{Weighted-conformal effective mass crossing the starvation floor.}
  \end{subfigure}\hfill
  \begin{subfigure}{0.48\textwidth}
    \centering\includegraphics[width=0.82\textwidth]{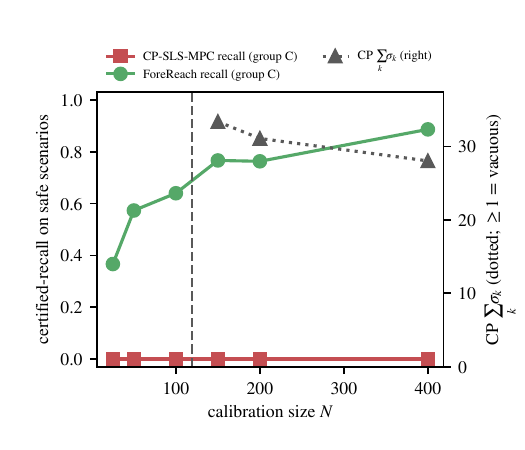}
    \caption{Certified recall on the sampled corridor across calibration size.}
  \end{subfigure}
  \\[4pt]
  \begin{subfigure}{0.48\textwidth}
    \centering\includegraphics[width=0.82\textwidth]{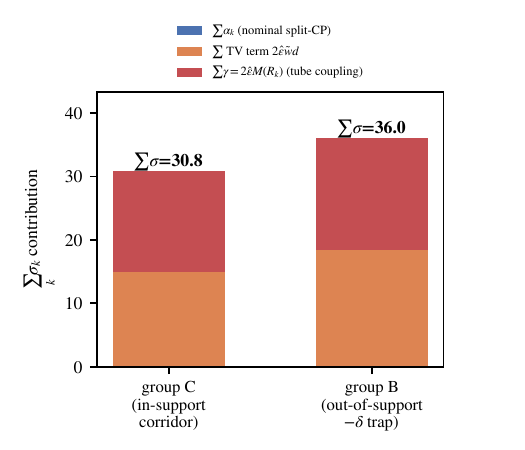}
    \caption{Three-term budget decomposition at $N=200$.}
  \end{subfigure}\hfill
  \begin{subfigure}{0.48\textwidth}
    \centering\includegraphics[width=0.82\textwidth]{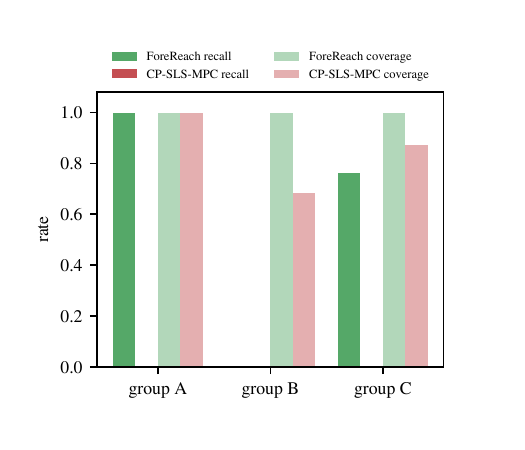}
    \caption{Per-group recall and coverage at $N=200$.}
  \end{subfigure}
  \caption{Evaluation of the published CP-SLS-MPC certificate-budget equations on
  the fixed candidate plans. Group A is the in-support low-demand region, group B
  the unsupported plans, and group C the corridor-supported plans.}
  \label{fig:supp-cpslsmpc}
\end{figure*}

\begin{table}[htbp]
\centering
\small
\caption{CP-SLS-MPC all-$N$ budget decomposition for the 6D dynamic bicycle. The reported total is the median of per-scenario total budgets. Component medians are descriptive and non-additive.}
\label{tab:cpslsmpc-all-n}
{\scriptsize\setlength{\tabcolsep}{4pt}%
\begin{tabular}{@{}rrrrrrr@{}}
\toprule
$N$ & starved & nonvacuous & $\alpha$ & TV & tube & median(total) \\
\midrule
25 & 100.0\% & 0 & -- & -- & -- & $+\infty$ \\
50 & 100.0\% & 0 & -- & -- & -- & $+\infty$ \\
100 & 100.0\% & 0 & -- & -- & -- & $+\infty$ \\
150 & 14.7\% & 0 & 0.10 & 15.11 & 15.15 & 30.38 \\
200 & 0.0\% & 0 & 0.10 & 14.84 & 15.88 & 31.08 \\
400 & 0.0\% & 0 & 0.10 & 14.78 & 15.76 & 30.55 \\
\bottomrule
\end{tabular}}
\end{table}

\section{Experimental Details and Reproducibility}

The accompanying archive contains code, configurations, and aggregate records that support checking the reported numerical results.

\subsection{Certified Point-Mass Regularity Declaration}
\label{sec:pointmass-certified-bound}

The point-mass implementation defines the residual directly in the
discrete update. Let $v=(v_x,v_y)$,
$s=(\|v\|_2^2+\delta_s)^{1/2}$ with $\delta_s=10^{-12}$, and
\begin{align}
  \ell(s)&=\beta_{\mathrm{pm}}\log\!\left(1+
    \exp\!\left(\frac{s-v_{\rm knee}}{\beta_{\mathrm{pm}}}\right)\right),
  &g(s)&=\frac{\ell(s)}{s+\epsilon},\notag\\
  M&=\begin{bmatrix}-\gamma&-\kappa\\ \kappa&-\gamma\end{bmatrix},\notag\\
  a_r(v)&=g(s)Mv. \label{eq:pointmass-residual-accel}
\end{align}
The state residual is
\begin{equation}
  r(x,u)=
  \begin{bmatrix}\tfrac12\Delta t^2a_r(v)\\ \Delta t\,a_r(v)\end{bmatrix}.
  \label{eq:pointmass-discrete-residual}
\end{equation}
Equations~\eqref{eq:pointmass-residual-accel} to
\eqref{eq:pointmass-discrete-residual} use $\gamma=0.4$, $\kappa=0.3$,
$v_{\rm knee}=0.9$, softplus temperature $\beta_{\mathrm{pm}}=0.14$, regularizer
$\epsilon=0.15$, and step $\Delta t=0.15$. They show analytically that
$\partial r_k/\partial p_x=\partial r_k/\partial p_y=0$ and
$\partial r_k/\partial u_x=\partial r_k/\partial u_y=0$ for every residual
coordinate.

For $q=\|v\|_2>0$, write $v=qn$ with $\|n\|_2=1$. Define
\begin{align}
  p(s)&=\left(1+\exp\!\left(-\frac{s-v_{\rm knee}}{\beta_{\mathrm{pm}}}\right)\right)^{-1},\notag\\
  \nu(s)&=(s+\epsilon)p(s)-\ell(s),\notag\\
  h(s)&=\frac{q^2}{s}\frac{\nu(s)}{(s+\epsilon)^2}.
  \label{eq:pointmass-radial-terms}
\end{align}
Direct differentiation gives
\begin{equation}
  D_va_r(v)=g(s)M+h(s)(Mn)n^\top.
  \label{eq:pointmass-residual-jacobian}
\end{equation}
The same formula holds at $q=0$ by continuity. For the fixed parameter values,
$\nu$ is nonnegative at the lower endpoint and
$\nu'(s)=(s+\epsilon)p(s)(1-p(s))/\beta_{\mathrm{pm}}\geq0$, hence $g$ and $h$ are
nonnegative on the certification interval. Let
$m=(\gamma^2+\kappa^2)^{1/2}$. Maximizing the quadratic angular factors in
Eq.~\eqref{eq:pointmass-residual-jacobian} gives the exact fixed-radius
componentwise extrema
\begin{align}
  d_{\rm diag}(s)&=\gamma g(s)+\frac{\gamma+m}{2}h(s),\notag\\
  d_{\rm cross}(s)&=\kappa g(s)+\frac{\kappa+m}{2}h(s).
  \label{eq:pointmass-angle-maxima}
\end{align}

The certified velocity domain is
$0\leq\|v\|_2\leq5.4$. Position and action extents do
not affect this derivative bound because their columns are structural zeros.
The remaining radial maximization uses
20000 cells at 80-decimal precision. On a
cell $[s_a,s_b]$, monotonicity gives $g(s)\leq g(s_b)$ and
$\nu(s)\leq\nu(s_b)$. The factor
\begin{equation}
  w(s)=\frac{s^2-\delta_s}{s(s+\epsilon)^2},\qquad h(s)=\nu(s)w(s),
\end{equation}
attains its cell maximum at an endpoint or at the unique positive root of
$s^3-\epsilon s^2-3\delta_s s-\epsilon\delta_s=0$. Evaluating these candidates
and rounding the reported decimal upward encloses both functions in
Eq.~\eqref{eq:pointmass-angle-maxima}.

The resulting componentwise matrix is
\begin{equation}
L^{\rm cert}=\begin{bmatrix}
0&0&0.004609514&0.003793347&0&0\\
0&0&0.003793347&0.004609514&0&0\\
0&0&0.061460184&0.050577964&0&0\\
0&0&0.050577964&0.061460184&0&0
\end{bmatrix}.
\label{eq:pointmass-certified-matrix}
\end{equation}
The comparison below gives each unique certified entry and the smallest
corresponding deployed entry across the symmetric coordinates. The deployed
matrix is the base declared Lipschitz bound scaled by the deployment inflation
factor $1.5$, whereas $L^{\rm cert}$ is certified independently over the domain,
so the entrywise bound $L^{\rm cert}\leq$ deployed confirms that the deployed
declaration bounds the residual regularity.

\begin{center}
\centering
\small
\begin{tabular}{@{}lcc@{}}
\toprule
Entry class & $L^{\rm cert}$ & Minimum deployed entry \\
\midrule
Position diagonal & 0.004609514 & 0.006914008 \\
Position cross & 0.003793347 & 0.005689541 \\
Velocity diagonal & 0.061460184 & 0.092186775 \\
Velocity cross & 0.050577964 & 0.075860540 \\
\bottomrule
\end{tabular}
\end{center}

The minimum nonzero entrywise margin over
Eq.~\eqref{eq:pointmass-certified-matrix} is 0.001896193.
The deployed four-dimensional matrix is therefore a certified
Lipschitz bound on the stated velocity domain.

\subsection{Dynamic-Bicycle Certified Regularity Declaration}
\label{sec:bicycle-candidate-bound}

The six-dimensional state is
$x=(x,y,\theta,v,a,\delta)$ with units
$(\mathrm{m},\mathrm{m},\mathrm{rad},\mathrm{m/s},
\mathrm{m/s^2},\mathrm{rad})$. The control is $(j,\omega)$ in
$(\mathrm{m/s^3},\mathrm{rad/s})$. The residual has the same units as the
discrete next-state coordinates. Its only analytic nonzero output is the
heading residual
\begin{equation}
  \begin{split}
  r_\theta(v,\delta)={}&-\frac{\Delta t\,\kappa_{us}}{L_{\mathrm{wb}}}\,v\tan\delta\;\times{}\\
  &\exp\!\left(\frac{-1}{\beta_{\mathrm{bic}}\left(\lvert v^2\tan\delta/L_{\mathrm{wb}}\rvert-\mu g\right)}\right)
  \end{split}
  \label{eq:bicycle-residual-heading}
\end{equation}
above the lateral-saturation knee and zero below it, with $\Delta t=0.1$,
$\kappa_{us}=0.32$, $L_{\mathrm{wb}}=2.5$, gate temperature
$\beta_{\mathrm{bic}}=0.1$, friction $\mu=0.9$, and $g=9.81$. The residual reads the state only through
$v$ and $\delta$, so every other output coordinate and every control derivative
is a structural zero.

Because $r_\theta$ is odd in $\delta$, the magnitudes
$\lvert\partial r_\theta/\partial v\rvert$ and
$\lvert\partial r_\theta/\partial\delta\rvert$ are even in $\delta$, so the
certified maximization runs on the positive quadrant
$0\leq v\leq25.00$ and
$0\leq\delta\leq0.48$. Writing
$E=\beta_{\mathrm{bic}}\left(v^2\tan\delta/L_{\mathrm{wb}}-\mu g\right)$ for the gate argument and
$\psi_{\mathrm{bic}}(E)=e^{-1/E}/E^2$, the partials are
\begin{align}
  \left\lvert\frac{\partial r_\theta}{\partial v}\right\rvert
   &=c_0\tan\delta\,e^{-1/E}
     +\frac{2c_0\beta_{\mathrm{bic}}}{L_{\mathrm{wb}}}\,v^2\tan^2\!\delta\,\psi_{\mathrm{bic}}(E),\notag\\
  \left\lvert\frac{\partial r_\theta}{\partial\delta}\right\rvert
   &=c_0\,v\sec^2\!\delta\,e^{-1/E}
     +\frac{c_0\beta_{\mathrm{bic}}}{L_{\mathrm{wb}}}\,v^3\sec^2\!\delta\,\tan\delta\,\psi_{\mathrm{bic}}(E),
  \label{eq:bicycle-certified-partials}
\end{align}
with $c_0=\Delta t\,\kappa_{us}/L_{\mathrm{wb}}$. On the positive quadrant $E$ increases in
both $v$ and $\delta$, the gate $e^{-1/E}$ increases in $E$, and $\psi_{\mathrm{bic}}$ is
unimodal with global maximum $4e^{-2}$ at $E=\tfrac12$. On a grid cell every
factor is therefore bounded by its cell supremum, taken at the upper speed and
steering corner for the algebraic factors and at the interior peak $E=\tfrac12$
for $\psi_{\mathrm{bic}}$ whenever the cell contains it, so the per-cell value dominates the true derivative at every point of the cell.
A grid of $16000$ by $8000$
cells over the operating box yields the certified matrix whose two nonzero
entries are
\begin{center}
\begin{tabular}{@{}lcc@{}}
\toprule
Derivative & Certified bound & Deployed entry \\
\midrule
$|\partial r_\theta/\partial v|$ & 0.011965171 & 0.017926410 \\
$|\partial r_\theta/\partial\delta|$ & 0.407688551 & 0.609638820 \\
\bottomrule
\end{tabular}
\end{center}
All other entries are structural zeros. The certified $v$-derivative maximizer
lies at the upper steering edge with an interior speed near the gate peak, and
the certified $\delta$-derivative maximizer lies at the upper speed and steering
corner. The minimum nonzero entrywise margin between the deployed and certified
matrices is $0.005961239$, so the deployed six-dimensional
matrix is a certified Lipschitz bound on the stated speed and steering
domain. As in the point-mass declaration, the deployed matrix is the base
declared Lipschitz bound scaled by the factor $1.5$, and the certified matrix
satisfies $L^{\rm cert}\leq$ deployed entrywise.

\subsection{Dynamic-Bicycle Protocol}

The dynamic-bicycle study uses a kinematic-bicycle nominal model and a
dynamic-bicycle target with lateral-force saturation, with the state, control,
step, horizon, calibration sizes, seeds, and scenario counts listed in
Table~\ref{tab:final-configuration-index}. Group A is an in-support, low-demand
control region, group B is an out-of-support, high lateral-demand region in the
uncalibrated negative-steering direction that the main text calls the unsupported
plans, and group C is the calibrated high lateral-demand positive-steering corridor
used for the informative-recall comparison that the main text calls the
corridor-supported plans. The sub-Gaussian channel adds independent componentwise
noise of scale $4.16\times10^{-6}$, five percent of the median one-step residual
amplitude $8.33\times10^{-5}$, so the noise term is a median fraction
$3.4\times10^{-4}$ of the residual-envelope width along the certified corridor and a
negligible part of the propagated tube width.

\subsection{Pre-Execution Action-Chunk Protocol}

The action-chunk protocol changes only the source of the fixed open-loop plan,
where a learned policy emits a 20-step control block that the six-state certifier
of the previous experiments receives unchanged. The nominal and target dynamics,
residual declaration, sample placement, noise channels, obstacle test, and
certification-domain check are inherited from the six-state experiment. An iLQR
expert tracks a feasible nominal reference, straight for group A and arced for
groups B and C, and a behavior-cloning policy with two width-64 hidden layers maps
a 12-dimensional policy input to the 40-dimensional action block.

The action-chunk grid uses ten seeds, $N\in\{25,50,100,150,200,400\}$, both noise
channels, and 250 plans in each of groups A to C. Table~\ref{tab:action-chunk}
reports the certificate rate and certified recall for the fixed-grid and
policy-corridor layouts. Every seed reuses the same 750 policy blocks across
sample counts, channels, and observation layouts, and each target-system
evaluation uses 200 initial-state realizations. The support statistic is
\begin{equation}
d=\max_t\min_i\left\|(x_t-x_i)/[40,40,\pi,25,3,0.5]\right\|_2,
\end{equation}
computed independently of the certificate. Over 45000 records the Spearman
correlation between $d$ and abstention is 0.697, the top-minus-bottom
distance-quartile abstention gap is 0.908, and across all 240 cells there are no
issued-certificate undercoverage events and no false certificates under valid
declarations.

Figure~\ref{fig:supp-firstabstain} plots the first-abstention-step distribution
across both sample layouts.

\begin{figure}[ht]
  \centering
  \includegraphics[width=0.72\columnwidth]{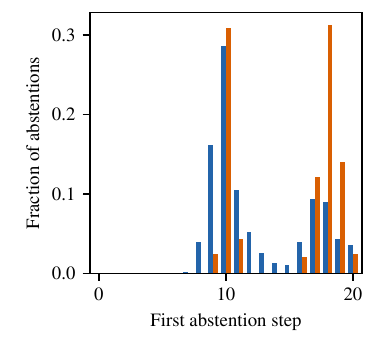}
  \\[3pt]
  \includegraphics[width=0.55\columnwidth]{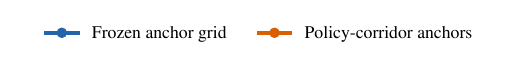}
  \caption{First-abstention-step distribution, a diagnostic of where
  certification first stops along the propagated action block.}
  \label{fig:supp-firstabstain}
\end{figure}

\begin{table}[htbp]
\centering
\scriptsize
\setlength{\tabcolsep}{4pt}
\caption{Action-chunk certification over ten seeds in the noiseless channel. Entries are mean $\pm$ sample standard deviation. Coverage is $1.000\pm0.000$ in every cell, and the sub-Gaussian channel tracks the noiseless channel within one point, both retained in full in the released code.}
\label{tab:action-chunk}
\begin{tabular}{@{}lrrr@{}}
\toprule
Sample layout & $N$ & Certificate rate & Certified recall \\
\midrule
Fixed grid & 25 & 0.237 $\pm$ 0.02433 & 0.356 $\pm$ 0.03650 \\
Fixed grid & 50 & 0.343 $\pm$ 0.02391 & 0.514 $\pm$ 0.03586 \\
Fixed grid & 100 & 0.401 $\pm$ 0.01584 & 0.601 $\pm$ 0.02376 \\
Fixed grid & 150 & 0.421 $\pm$ 0.01696 & 0.632 $\pm$ 0.02545 \\
Fixed grid & 200 & 0.437 $\pm$ 0.01481 & 0.655 $\pm$ 0.02222 \\
Fixed grid & 400 & 0.463 $\pm$ 0.00790 & 0.695 $\pm$ 0.01185 \\
\addlinespace
Policy corridor & 25 & 0.347 $\pm$ 0.02224 & 0.521 $\pm$ 0.03336 \\
Policy corridor & 50 & 0.463 $\pm$ 0.02473 & 0.694 $\pm$ 0.03710 \\
Policy corridor & 100 & 0.551 $\pm$ 0.02319 & 0.826 $\pm$ 0.03478 \\
Policy corridor & 150 & 0.596 $\pm$ 0.02067 & 0.893 $\pm$ 0.03100 \\
Policy corridor & 200 & 0.614 $\pm$ 0.01856 & 0.921 $\pm$ 0.02784 \\
Policy corridor & 400 & 0.647 $\pm$ 0.01149 & 0.971 $\pm$ 0.01723 \\
\bottomrule
\end{tabular}
\end{table}

\subsection{Pairwise Consistency Test Rejection and Coverage Denominators}

The pairwise consistency test runs once for each domain, calibration size,
observation channel, and seed before any plan in that cell is propagated. A
propagated plan then receives exactly one of four outcomes, a pairwise rejection
$\abstainF$ with no tube, a domain exit $\abstainD$ with only a prefix tube, a
safety-test abstention $\abstainS$ after a complete tube, or a certificate after
a complete tube. Only $\abstainS$ and certificate outcomes produce a complete
tube, so $\abstainF$ and $\abstainD$ leave the conditional-coverage denominator,
prefix containment is reported separately, and the joint containment uses no
imputed denominator. Across the 140 consistency-test decisions there are 0
rejections, so every seed-level declaration enters propagation, while a plan that
passes the test can still leave the certification domain.

\subsection{Certification-Domain Check}

The certifier checks every query zonotope against the supplied certification
domain before evaluating the next residual envelope. Across
186000 queries it returns
\(\abstainD\) on 39807 in-loop exits, with no exit after the horizon completes.
These exits concentrate on the out-of-support group-B plans, while the
corridor-supported group-C plans reach the safety test, and the full per-cell cause
rates for every noiseless and sub-Gaussian cell are retained in the provided code.

\subsection{Reachable-Set Representation}

Under identical scenarios, residual observations, declarations, noise bounds, and
nonlinear-remainder rules, the paired box and zonotope representations isolate the
effect of preserving generator correlations. On the 4D point mass the projected-area
ratio is 1.000, so the box already suffices, while on the 6D bicycle the zonotope area
is 24 to 58 percent smaller, reaching a ratio of 0.421 on the corridor-supported group
C at $N=400$, where it certifies 0.492 of the plans against 0.415 for the box. The full
per-cell areas and safe rates are retained in the provided code.

The zonotope representation controls generator growth with the CORA-aligned Girard
order reduction \citep{althoff2015cora}. After each propagation step the operator
$\operatorname{red}$ caps the zonotope order at 5, so the reduced representation
keeps at most five generators per state dimension under the implementation order
convention. It retains the highest-scored generators and replaces the discarded
columns $g_j$, indexed by the discard set $\mathcal J_t$, with the diagonal outer
bound $\operatorname{diag}(\sum_{j\in\mathcal J_t}|g_j|)$, so that
$\widehat{\cX}_{t+1}\subseteq\operatorname{red}(\widehat{\cX}_{t+1})$ and the
containment premise of the main-text containment theorem holds.

The obstacle test runs directly on the zonotope, using its exact projected geometry.
For each projected slice $P\cX_t$ and each circular obstacle, the implementation
computes the exact distance from the projected zonotope polygon to the obstacle
center and certifies avoidance only when a support-function dual lower bound on
that distance, which closes its duality gap, exceeds the obstacle radius.
This realizes the sufficient condition $R_t<m_t$ of the main text without forming
$R_t$, so a certificate implies that the projected reachable set is disjoint from
the obstacle.

\subsection{Certificate-Budget and Envelope Sensitivity}

Under four covariance plugins for the CP-SLS-MPC certificate equations, evaluated across
the tested calibration sizes, every tested row is starved or
vacuous, and the complete sweep is retained in the provided code. The scalar-envelope
ablation, which collapses the componentwise residual bound to a single scalar, drives
every plan out of the declared domain, so the certificate rate is zero and
complete-trajectory coverage has an empty denominator. A sub-Gaussian noise sweep
on the six-state corridor at $N=400$ lowers the corridor-supported certified recall
from 0.983 at zero noise to 0.867 when the componentwise noise scale reaches 3.3 times
the median $\ell_\infty$ residual amplitude across the observations.

\subsection{Nominal-Model Degradation and Norm Ratios}

Here $L_f$ and $L_r$ are the componentwise Lipschitz matrices of the full
dynamics and of the residual over $\cZ_{\mathrm{cert}}$, each entry the supremum
of the corresponding absolute partial derivative, and $\|\cdot\|_F$ is the
Frobenius norm. The full-dynamics to residual Lipschitz-norm ratio
$\|L_f\|_F/\|L_r\|_F$ is 18.481 in the point-mass domain and
11.368 in the bicycle, so the benefit of the nominal model is
domain dependent. Under the degradation scan, the point-mass domain tolerates
16.638\% nominal degradation before corridor-supported certified
recall crosses the common reference level of $0.5$, while the bicycle tolerates
1.386\%.

\subsection{Parameter-Box Declaration}

Table~\ref{tab:parameter-box} reports the study that declares the
residual Jacobian bound uniformly over a design-parameter box. In
this synthetic study the nominal and target parameters coincide, the parameter
boxes are fixed before the bounds are computed, and the certifier reads only
their endpoints. As the box widens the slack ratio
$\|L^\Theta\|/\|L^{\mathrm{cert}}\|$ rises and the corridor-supported certified recall moves
from 0.993 at the $\pm5\%$ box to
0.870 at the $\pm20\%$ box, while the unsupported-plan false certificate count stays zero and the completeness rate stays near
one.

\begin{table}[htbp]
\centering
\small
\caption{Parameter-box demonstration on the six-state corridor at $N=400$, noiseless channel. Each tier declares the residual Jacobian bound uniformly over a design-parameter box and reads no target parameter value. The slack ratio is $\|L^\Theta\|/\|L^{\mathrm{cert}}\|$. The group-B false certificate count stays zero.}
\label{tab:parameter-box}
{\scriptsize\setlength{\tabcolsep}{4pt}%
\begin{tabular}{@{}lccccc@{}}
\toprule
Box & Slack ratio & $p_{\mathrm{complete}}$ & Joint & Certified recall & False cert. \\
\midrule
$\pm5\%$ & 1.177 & 1.000 & 1.000 & 0.993 & 0 \\
$\pm10\%$ & 1.606 & 1.000 & 1.000 & 0.975 & 0 \\
$\pm20\%$ & 2.838 & 0.997 & 0.997 & 0.870 & 0 \\
\bottomrule
\end{tabular}}
\end{table}

\subsection{Per-Seed Uncertainty}

Table~\ref{tab:per-seed-uncertainty} reports the validity and
informativeness rates with their seed-level dispersion for every method in the
noiseless channel, with the baselines shown at the representative calibration
sizes. The complete per-seed values and the sub-Gaussian channel are retained
in the provided code. Across the ten seeds our method holds Group-B coverage at
one with zero dispersion at every calibration size, so its validity does not
change with the seed draw, and its Group-C certified recall rises with the
calibration size while its standard deviation shrinks as more corridor
observations enter the envelope. The calibration baselines instead show wide
seed-level dispersion in Group-B coverage, which reflects that their validity
depends on whether the finite sample happens to cover the queried region.

\begin{table}[htbp]
\centering
\scriptsize
\setlength{\tabcolsep}{4pt}
\caption{Per-method uncertainty summary in the noiseless channel. Each entry is mean $\pm$ sample standard deviation over the ten fixed seeds. ForeReach is shown at all calibration sizes and uses the zonotope representation, and the seven baselines are shown at the representative sizes $N\in\{25,100,400\}$. The ForeReach sub-Gaussian rows match its noiseless rows because the sub-Gaussian noise scale is negligible for the group-C recall in this domain, so they are retained in full in the released code together with every baseline size.}\label{tab:per-seed-uncertainty}
\begin{tabular}{@{}lrcc@{}}
\toprule
Method & $N$ & Group-B coverage & Group-C certified recall \\
\midrule
ForeReach & 25 & $1.0000\pm 0.0000$ & $0.4899\pm 0.1251$ \\
ForeReach & 50 & $1.0000\pm 0.0000$ & $0.7850\pm 0.0857$ \\
ForeReach & 100 & $1.0000\pm 0.0000$ & $0.8817\pm 0.0638$ \\
ForeReach & 150 & $1.0000\pm 0.0000$ & $0.9128\pm 0.0540$ \\
ForeReach & 200 & $1.0000\pm 0.0000$ & $0.9496\pm 0.0262$ \\
ForeReach & 400 & $1.0000\pm 0.0000$ & $0.9832\pm 0.0198$ \\
\addlinespace
Global calibration & 25 & $0.4216\pm 0.2214$ & $0.0656\pm 0.1740$ \\
Global calibration & 100 & $0.1625\pm 0.1006$ & $0.4072\pm 0.4400$ \\
Global calibration & 400 & $0.0448\pm 0.0450$ & $0.9800\pm 0.0425$ \\
Regional calibration & 25 & $0.4216\pm 0.2214$ & $0.0656\pm 0.1740$ \\
Regional calibration & 100 & $0.8205\pm 0.2882$ & $0.1184\pm 0.3115$ \\
Regional calibration & 400 & $0.9015\pm 0.0491$ & $0.0696\pm 0.0300$ \\
Plug-in Gaussian & 25 & $0.1459\pm 0.1164$ & $0.9785\pm 0.0457$ \\
Plug-in Gaussian & 100 & $0.1773\pm 0.0336$ & $1.0000\pm 0.0000$ \\
Plug-in Gaussian & 400 & $0.1738\pm 0.0403$ & $1.0000\pm 0.0000$ \\
GP ($k=2$) & 25 & $0.0654\pm 0.0737$ & $0.9640\pm 0.0443$ \\
GP ($k=2$) & 100 & $0.0442\pm 0.0329$ & $1.0000\pm 0.0000$ \\
GP ($k=2$) & 400 & $0.0145\pm 0.0213$ & $1.0000\pm 0.0000$ \\
GP ($k=3$) & 25 & $0.2156\pm 0.1738$ & $0.9297\pm 0.0633$ \\
GP ($k=3$) & 100 & $0.1594\pm 0.0471$ & $1.0000\pm 0.0000$ \\
GP ($k=3$) & 400 & $0.1018\pm 0.0625$ & $1.0000\pm 0.0000$ \\
Alanwar Alg.~6 & 25 & $1.0000\pm 0.0000$ & $0.0000\pm 0.0000$ \\
Alanwar Alg.~6 & 100 & $1.0000\pm 0.0000$ & $0.0000\pm 0.0000$ \\
Alanwar Alg.~6 & 400 & $1.0000\pm 0.0000$ & $0.0000\pm 0.0000$ \\
CP-SLS-MPC equations & 25 & $0.0000\pm 0.0000$ & $0.0000\pm 0.0000$ \\
CP-SLS-MPC equations & 100 & $0.0000\pm 0.0000$ & $0.0000\pm 0.0000$ \\
CP-SLS-MPC equations & 400 & $0.6398\pm 0.0544$ & $0.0000\pm 0.0000$ \\
\bottomrule
\end{tabular}
\end{table}

\subsection{Configurations}

The configuration index below covers every experiment reported in the
manuscript and supplement, with full Lipschitz bounds and scenario bounds in the
provided code. Every experiment shares the zonotope representation at maximum
order five, the in-loop query-set domain check, the noiseless and sub-Gaussian
channels, and the ten fixed seeds, so each run is deterministic and reproduces
the reported numbers when it is replayed from the released code. The remaining
rows record the step size, horizon, calibration sizes, and scenario counts that
distinguish the four-dimensional point mass, the six-dimensional dynamic
bicycle, the action-chunk certification, the nominal-model value scans, the
noise-floor analyses, and the sensitivity study.

\begin{table}[H]
\centering
\scriptsize
\setlength{\tabcolsep}{3pt}
\caption{Configurations for experiments reported in the manuscript and supplement. Full residual bounds and scenario bounds remain in the released code.}\label{tab:final-configuration-index}
\begin{tabular}{@{}p{0.18\columnwidth}p{0.78\columnwidth}@{}}
\toprule
Experiment & Configuration \\
\midrule
Common protocol & Zonotope representation at maximum order 5, an in-loop query-set domain check, the noiseless and sub-Gaussian channels, componentwise Gaussian noise of standard deviation $\sigma=0.05\,\mathrm{median}_z\lVert r(z)\rVert_\infty$, and ten fixed seeds \\
4D point mass & Step 0.15, horizon 12, and 100 samples over groups A, B, and C, with 600 scenarios per group and seed and 300 truth rollouts per scenario \\
6D dynamic bicycle & Step 0.1, horizon 12, and sample sizes 25, 50, 100, 150, 200, and 400 over groups A, B, and C, with 250 scenarios per group and seed and 200 truth rollouts per scenario \\
Action-chunk certification & Horizon 20, the fixed-grid and corridor layouts, sample sizes 25 through 400, and both channels \\
Nominal-model value scans & 49 cells in 4D and 60 cells in 6D, both under the same implementation, channels, and seeds \\
Noise-floor analyses & A fixed-geometry $\sigma$ sweep in both domains and the two-system numerical noise-floor witness \\
Sensitivity analyses & Paired box and zonotope representation rows, four CP-SLS-MPC covariance plugins, and the scalar residual-envelope ablation \\
\bottomrule
\end{tabular}
\end{table}

\end{document}